
\documentclass[sigconf]{acmart}

\AtBeginDocument{%
  }


\usepackage{amssymb} 
\usepackage{bm}
\usepackage{bbm}
\usepackage{nicefrac}
\usepackage{siunitx}
\usepackage{enumitem}

\usepackage{graphicx}
\usepackage{svg}
\usepackage{booktabs}
\usepackage{colortbl}
\usepackage{multirow}
\usepackage{multicol}
\usepackage{tabularx}
\usepackage{makecell}
\usepackage{diagbox}
\usepackage{wrapfig}
\usepackage{threeparttable} 
\usepackage{tablefootnote}
\usepackage{float}
\usepackage{balance}
\usepackage{algorithm}
\usepackage{algpseudocode}

\usepackage{pifont}
\usepackage[normalem]{ulem} 
\usepackage{lipsum} 

\usepackage[utf8]{inputenc}


%
%

\theoremstyle{acmplain}
\newtheorem{theorem}{Theorem}[section]

\copyrightyear{2025}
\acmYear{2025}
\setcopyright{acmlicensed}\acmConference[MM '25]{Proceedings of the 33rd ACM International Conference on Multimedia}{October 27--31, 2025}{Dublin, Ireland}
\acmBooktitle{Proceedings of the 33rd ACM International Conference on Multimedia (MM '25), October 27--31, 2025, Dublin, Ireland}
\acmDOI{10.1145/3746027.3755331}
\acmISBN{979-8-4007-2035-2/2025/10}
\settopmatter{printacmref=false}

\begin{document}

\title{SpeCa: Accelerating Diffusion Transformers with Speculative Feature Caching}

\author{Jiacheng Liu}
\authornote{Both authors contributed equally to this research.}
\affiliation{
    \institution{Shanghai Jiao Tong University}
    \city{Shanghai}
    \country{China}
}
\affiliation{
  \institution{Shandong University}
  \city{Weihai}
    \country{China}
}

\author{Chang Zou}
\authornotemark[1]
\affiliation{
    \institution{Shanghai Jiao Tong University}
    \city{Shanghai}
    \country{China}
}
\affiliation{
    \institution{University of Electronic Science and Technology of China}
    \city{Chengdu}
    \country{China}
}

\author{Yuanhuiyi Lyu}
\affiliation{
  \institution{The Hong Kong University of Science and Technology (Guangzhou)}
    \city{Guangzhou}
    \country{China}
}

\author{Fei Ren}
\affiliation{
\institution{Tsinghua University}
\city{Beijing}
\country{China}
}

\author{Shaobo Wang}
\affiliation{
\institution{Shanghai Jiao Tong University}
    \city{Shanghai}
    \country{China}
}

\author{Kaixin Li}
\affiliation{
  \institution{National University of Singapore}
  \country{Singapore}
}

\author{Linfeng Zhang}
\authornote{Corresponding author.}
\affiliation{
    \institution{Shanghai Jiao Tong University}
    \city{Shanghai}
    \country{China}
}
\email{zhanglinfeng@sjtu.edu.cn}

\renewcommand{\shortauthors}{Jiacheng Liu, Chang Zou et al.}

\begin{abstract}

Diffusion models have revolutionized high-fidelity image and video synthesis, yet their computational demands remain prohibitive for real-time applications. These models face two fundamental challenges: strict temporal dependencies preventing parallelization, and computationally intensive forward passes required at each denoising step. Drawing inspiration from speculative decoding in large language models, we present \textit{SpeCa}, a novel ``\textit{\textbf{Forecast-then-verify}}'' acceleration framework that effectively addresses both limitations. \textit{SpeCa}'s core innovation lies in introducing Speculative Sampling to diffusion models, predicting intermediate features for subsequent timesteps based on fully computed reference timesteps. Our approach implements a parameter-free verification mechanism that efficiently evaluates prediction reliability, enabling real-time decisions to accept or reject each prediction while incurring negligible computational overhead. 
Furthermore, \textit{SpeCa} introduces sample-adaptive computation allocation that dynamically modulates resources based on generation complexity—allocating reduced computation for simpler samples while preserving intensive processing for complex instances. Experiments demonstrate 6.34$\times$ acceleration on FLUX with minimal quality degradation (5.5\% drop), 7.3$\times$ speedup on DiT while preserving generation fidelity, and 79.84\% VBench score at 6.1$\times$ acceleration for  HunyuanVideo. The verification mechanism incurs minimal overhead (1.67\%-3.5\% of full inference costs), establishing a new paradigm for efficient diffusion model inference while maintaining generation quality even at aggressive acceleration ratios. Our codes have been released in Github: 
\textbf{\href{https://github.com/Shenyi-Z/Cache4Diffusion/}{\texttt{\textcolor{cyan}{https://github.com/Shenyi-Z/Cache4Diffusion/}}}}

\end{abstract}

\begin{CCSXML}
<ccs2012>
   <concept>
       <concept_id>10010147.10010178.10010224</concept_id>
       <concept_desc>Computing methodologies~Computer vision</concept_desc>
       <concept_significance>500</concept_significance>
       </concept>
 </ccs2012>
\end{CCSXML}

\ccsdesc[500]{Computing methodologies~Computer vision}

\keywords{Diffusion Models, Feature Cache, Speculative Sampling}

\settopmatter{printacmref=true}

\maketitle

\section{Introduction}

Diffusion Models (DMs)~\cite{DM} have established themselves as a dominant force in generative AI, achieving state-of-the-art performance across image synthesis~\cite{StableDiffusion} and video generation~\cite{blattmann2023SVD}. While architectural breakthroughs like Diffusion Transformers (DiT)~\cite{DiT} have substantially enhanced generation fidelity through scalable transformer architectures, they remain constrained by an intrinsic limitation of diffusion-based approaches: the sequential dependency in sampling. Each generation requires executing dozens to hundreds of denoising steps, with each step necessitating a complete forward pass through increasingly complex models. This computational paradigm creates an escalating tension between generation quality and inference efficiency - a dilemma that becomes particularly acute as model sizes expand and generation tasks grow more sophisticated. For instance, modern video generation architectures like HunyuanVideo require 595.46 TFLOPs per forward pass (480p, 2s, 50steps), making real-time generation computationally prohibitive.
\begin{figure}
\centering
\includegraphics[width=1\linewidth]{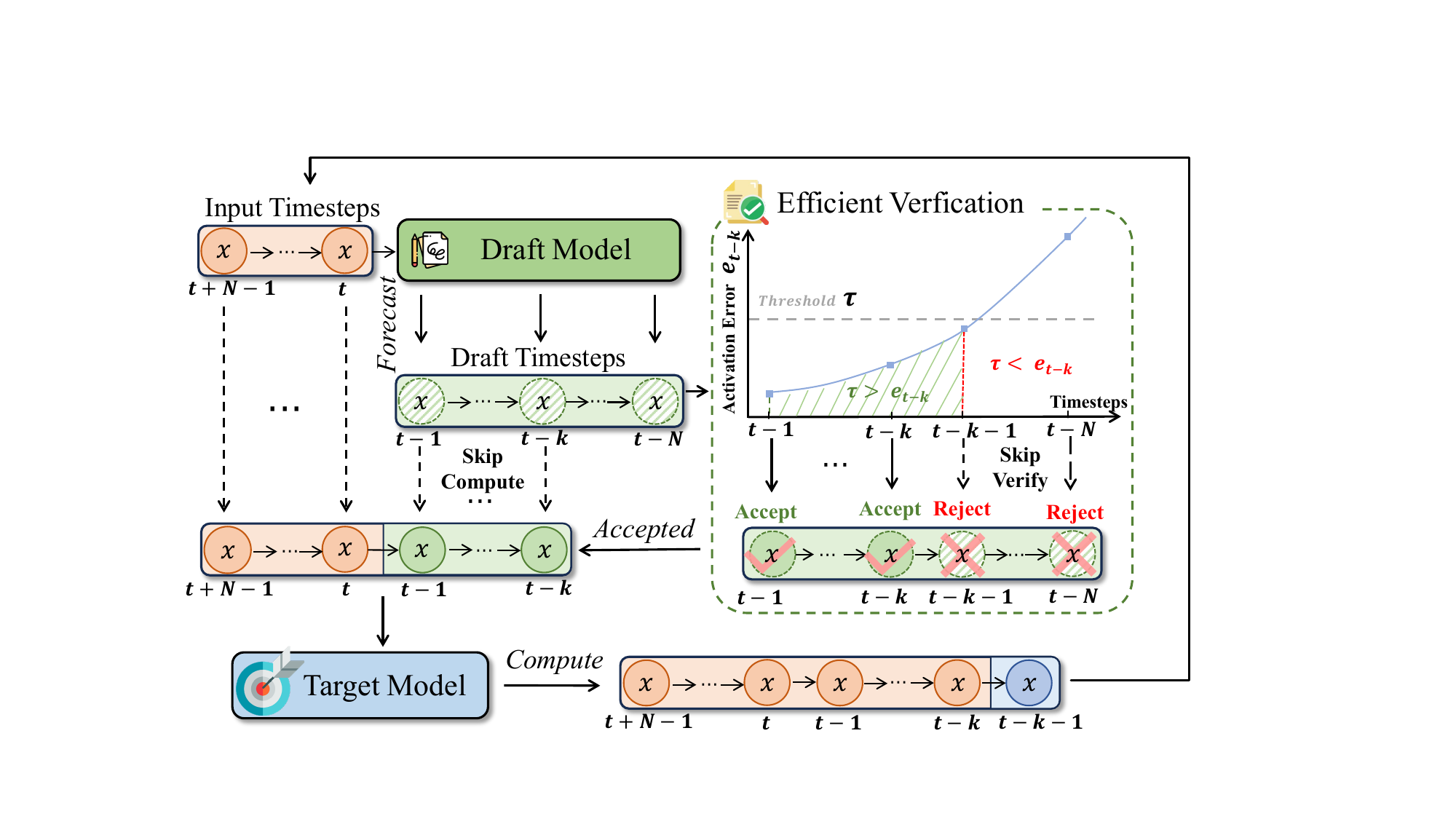}
\caption{\textbf{SpeCa's speculative execution workflow.} The draft model predicts $N$ future timesteps ($t-1$ to $t-N$); lightweight verification checks activation errors. Steps are accepted sequentially until error exceeds $\tau$ at $t-k$, where prediction is rejected. Accepted steps are cached, and the target model resumes computation from $t-k-1$ to ensure fidelity.}
\label{fig:heading}
\vspace{-7mm}
\end{figure}

The computational challenge originates from two fundamental characteristics of diffusion sampling: (1) \textbf{Strict temporal dependencies} enforcing step-by-step execution that prohibits parallelization, and (2) \textbf{Full-model forward passes} required at each timestep, becoming prohibitively expensive for modern architectures. While reduced-step samplers like DDIM~\cite{songDDIM} attempt to address the first challenge through non-Markovian processes, they still face an inescapable trade-off - aggressive step reduction (e.g., from 1000 to 50 steps) inevitably degrades output quality due to trajectory truncation. Recent caching acceleration methods have emerged to tackle primarily the second challenge, with approaches exploring different perspectives - token-based methods ~\cite{selvaraju2024fora}, ~\cite{zou2024accelerating},~\cite{zou2024DuCa} and residual-based approaches ~\cite{chen2024delta-dit} focus on reusing computations across timesteps to reduce per-step computational burden. However, these methods are inherently limited by their reliance on feature similarity between adjacent timesteps, with efficacy rapidly diminishing as acceleration ratios increase. Methods like TaylorSeer~\cite{liuReusingForecastingAccelerating2025} attempt to leverage temporal dependencies more effectively through Taylor series approximations to predict multi-step evolution, but critically lack error correction mechanisms. This absence of validation allows prediction errors to compound exponentially, particularly at high acceleration ratios where minor inaccuracies in early steps catastrophically distort the generation trajectory. Furthermore, existing approaches uniformly apply the same acceleration ratio to all samples and maintain fixed sampling intervals, resulting in computational inefficiency where simple samples receive excessive computation while complex samples remain underserved.

Drawing inspiration from the success of Speculative Decoding in large language models, we present \textit{SpeCa}, an acceleration framework based on a ``\textit{\textbf{Forecast-then-verify}}" mechanism. Unlike traditional step-by-step inference, \textit{SpeCa} adopts a speculative sampling paradigm that implements efficient inference through a three-step core workflow: (1) performing complete forward computations at strategically selected key timesteps to obtain accurate feature representations; (2) predicting features for multiple subsequent timesteps based on these high-fidelity representations; and (3) evaluating the reliability of each predicted feature through a rigorous error verification mechanism at each timestep, dynamically deciding whether to accept or reject predictions. This approach not only addresses the limitations of existing caching methods but also introduces adaptive computation allocation based on sample complexity, providing higher acceleration for simpler samples while ensuring complex samples receive sufficient computational resources to maintain high-fidelity generation quality.

Through extensive experimental evaluation across multiple model architectures and generation tasks, we demonstrate that \textit{SpeCa} significantly outperforms existing acceleration methods in both efficiency and output fidelity. On FLUX.1-dev model, our approach maintains remarkable generation quality with only a 5.5\% quality degradation at 6.34$\times$ acceleration, while the current SOTA method TaylorSeer suffers a substantial 17.5\% quality loss at the same acceleration ratio. Similarly, for DiT model, where competing methods exhibit catastrophic quality deterioration beyond 3$\times$ acceleration, \textit{SpeCa} preserves high-fidelity generation even at an impressive 7.3$\times$ acceleration ratio. Furthermore, when applied to the computationally intensive HunyuanVideo architecture, our method achieves 6.16$\times$ acceleration while maintaining SOTA Vbench scores 79.84\%. Notably, these significant performance gains are achieved with an extremely \textbf{lightweight verification mechanism that consumes merely 3.5\% (DiT), 1.75\% (FLUX), and 1.67\% (HunyuanVideo) of the complete forward pass computation}, yet effectively prevents error accumulation even at aggressive acceleration ratios.

In summary, our contributions are as follows:
\begin{itemize}[leftmargin=10pt,topsep=0pt]
\item \textbf{\textit{SpeCa} Framework}: We propose the \textit{``\textbf{Forecast-then-verify}''} acceleration framework for diffusion models, \textit{SpeCa}, inspired by Speculative Decoding in large language models, addressing the critical efficiency bottleneck in diffusion model inference. This framework, through precise forward prediction and a rigorous lightweight verification mechanism, transcends the theoretical limitations of traditional acceleration methods,  resolving the quality collapse issue at high acceleration ratios.
\item \textbf{Sample-adaptive Computation Allocation}: \textit{SpeCa} dynamically allocates computational steps by sample complexity. \textbf{On HunyuanVideo}, it achieves computation reduction for low-error samples (57.5\% cases accelerated 6.48$\times$) while reserving resources for complex cases (42.5\% cases accelerated 5.82$\times$). This model-specific distribution enables \textbf{1\% VBench quality drop} compared to full computation, outperforming fixed-step methods in quality-efficiency tradeoffs. 
\item \textbf{State-of-the-Art Performance}: Extensive experiments across various model architectures (DiT, FLUX, HunyuanVideo) demonstrate that \textit{SpeCa} significantly outperforms existing acceleration methods. Notably, it achieves only a 5.5\% quality degradation at 6.34$\times$ acceleration on FLUX.1-dev; maintains high-quality generation even at 7.3$\times$ acceleration on DiT; and sustains a SOTA Vbench score of 79.84\% on the computationally intensive HunyuanVideo at 6.1$\times$ acceleration. These results establish \textit{SpeCa} as a new benchmark for efficient diffusion model inference.
\end{itemize}

\begin{figure}
\centering
\includegraphics[width=0.95\linewidth]{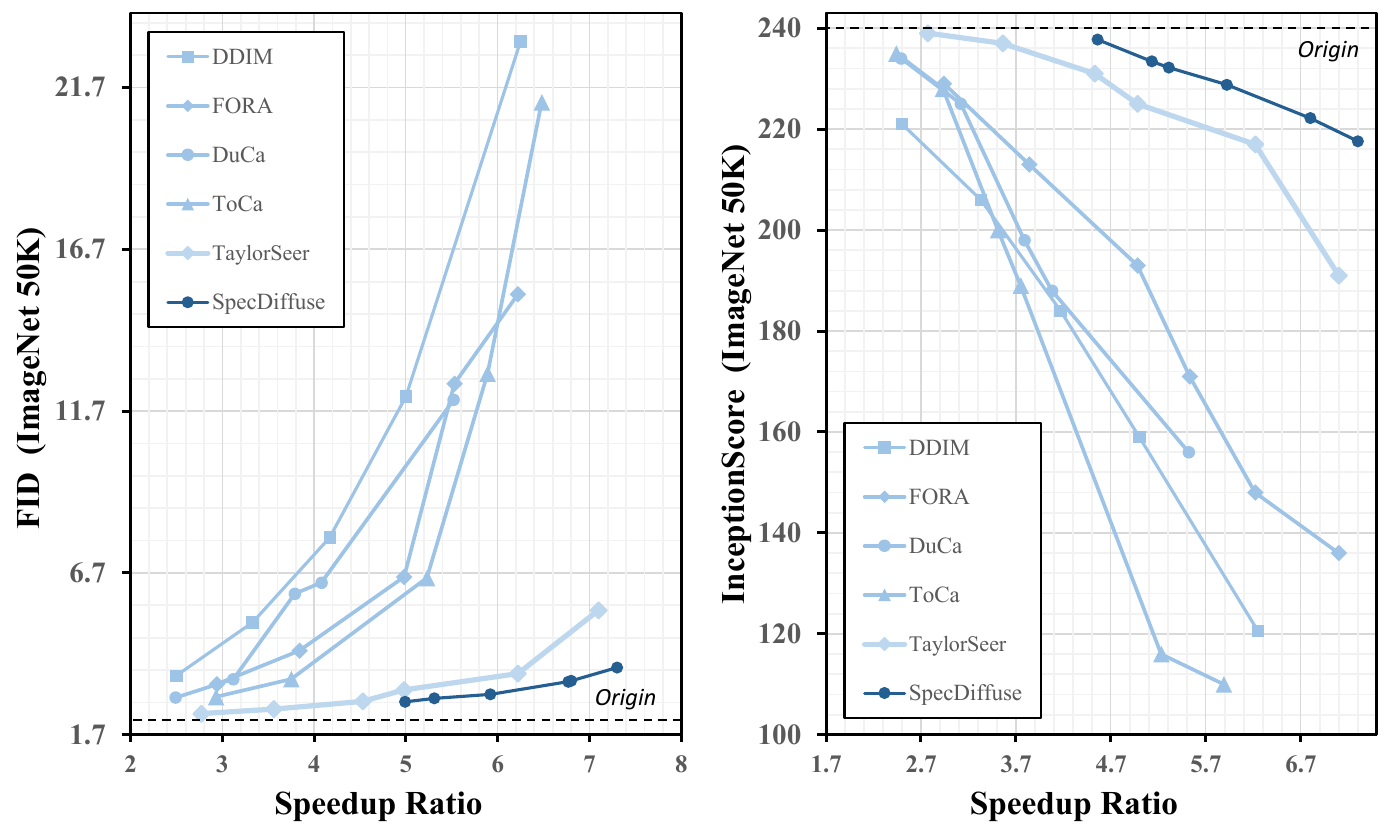}
\caption{\textbf{Comparison of caching methods} in terms of Inception Score (IS) and FID. \textit{SpeCa} achieves superior performance, especially at high acceleration ratios.}
\label{fig:fid-flops}
\vspace{-6mm}
\end{figure}
\vspace{-5mm}

\section{Related Works}\label{sec Works}
Diffusion models~\cite{sohl2015deep,ho2020DDPM} have shown exceptional capabilities in image and video generation. Early architectures, primarily based on U-Net~\cite{ronneberger2015unet}, faced scalability limitations that hindered large model training and deployment. The introduction of Diffusion Transformer (DiT)~\cite{peebles2023dit} overcame these issues, leading to significant advancements and state-of-the-art performance across various domains~\cite{chen2023pixartalpha,chen2024pixartsigma,opensora,yang2025cogvideox}. However, the sequential sampling process in diffusion models remains computationally demanding, prompting the development of acceleration techniques.

\vspace{-4mm}
\subsection{Sampling Timestep Reduction}
\vspace{-1mm}
A key approach to accelerating diffusion models is \textit{minimizing sampling steps while preserving output quality}. DDIM~\cite{songDDIM} introduced a deterministic sampling method that reduced denoising iterations without sacrificing fidelity. The DPM-Solver series~\cite{lu2022dpm,lu2022dpm++,zheng2023dpmsolvervF} further advanced this with high-order ODE solvers. Other strategies, such as Rectified Flow~\cite{refitiedflow} and knowledge distillation~\cite{salimans2022progressive,meng2022on}, reduce the number of denoising steps. Consistency Models~\cite{song2023consistency} enable few-step sampling by directly mapping noisy inputs to clean data, removing the need for sequential denoising.

\vspace{-4mm}
\subsection{Denoising Network Acceleration}
\vspace{-1mm}
To accelerate inference, \textit{optimizing the denoising network's computational efficiency} is crucial. This can be classified into \textit{Model Compression-based} and \textit{Feature Caching-based} techniques.

\vspace{-2mm}
\paragraph{Model Compression-based Acceleration.}
\vspace{-1mm}
Model compression techniques, including network pruning~\cite{structural_pruning_diffusion, zhu2024dipgo}, quantization~\cite{10377259, shang2023post, kim2025ditto}, knowledge distillation~\cite{li2024snapfusion}, and token reduction~\cite{bolya2023tomesd, kim2024tofu}, aim to reduce model complexity while maintaining performance. These methods require retraining or fine-tuning to minimize quality loss and achieve faster inference, though they often involve trade-offs, reducing model size at the cost of expressive power and accuracy.

\vspace{-2mm}
\paragraph{Feature Caching-based Acceleration.}
\vspace{-1mm}
Feature caching is particularly useful for DiT models~\cite{li2023FasterDiffusion, ma2024deepcache}. Techniques like FORA~\cite{selvaraju2024fora} and $\Delta$-DiT~\cite{chen2024delta-dit} reuse attention and MLP representations, while TeaCache~\cite{liu2024timestep} dynamically estimates timestep-dependent differences. DiTFastAttn~\cite{yuan2024ditfastattn} reduces redundancies in self-attention, and ToCa~\cite{zou2024accelerating} updates features dynamically. EOC~\cite{qiu2025acceleratingdiffusiontransformererroroptimized} optimizes using prior knowledge. Innovations like UniCP~\cite{sun2025unicpunifiedcachingpruning} and RAS~\cite{liu2025regionadaptivesamplingdiffusiontransformers} further improve efficiency. However, existing methods rely on a ``\textit{cache-then-reuse}'' paradigm, which loses effectiveness as timestep gaps grow. TaylorSeer~\cite{liuReusingForecastingAccelerating2025} introduced a ``\textit{cache-then-forecast}'' paradigm, predicting future features, but lacks mechanisms to verify prediction accuracy, which may lead to error accumulation.

\begin{figure*}[t]
\centering
\includegraphics[width=\linewidth]{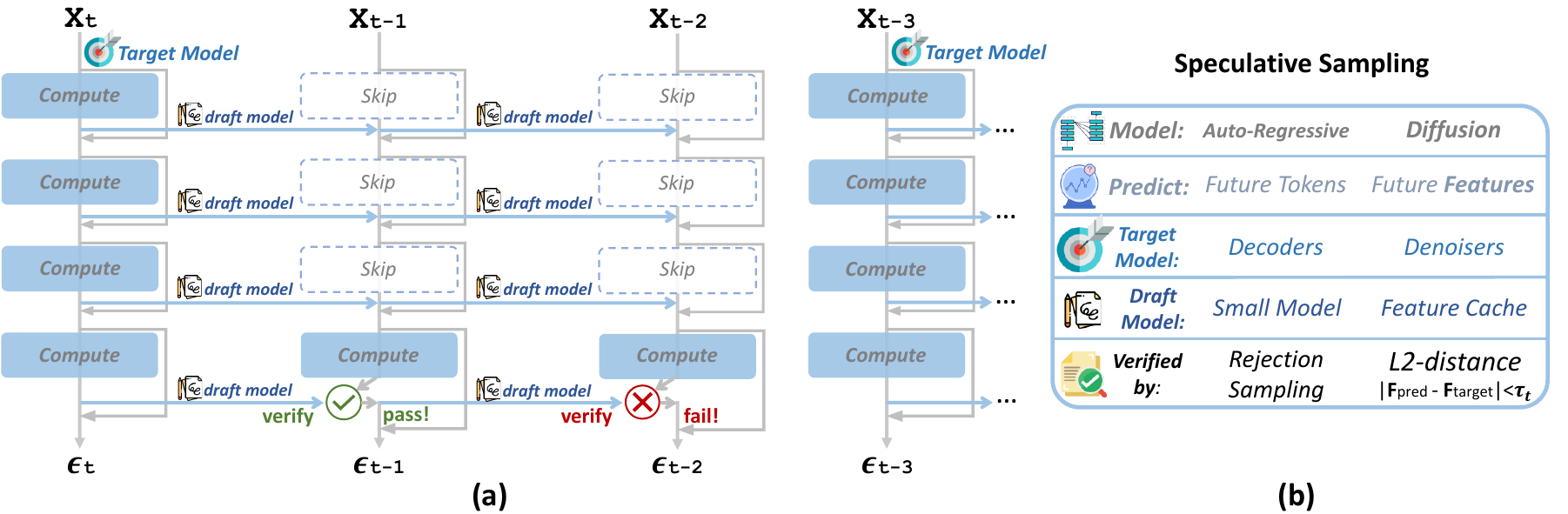}
\vspace{-8mm}
\caption{\textbf{Overview of the SpeCa framework}, which introduces speculative sampling to diffusion models via feature caching. (a) TaylorSeer, a lightweight draft model, predicts future activation features ($t-1$ to $t-N$); a verification module checks L2 error at the final layer. If error exceeds threshold $\tau$ (e.g., at $t-2$), predictions are rejected and full model computation resumes. (b) Comparison with speculative sampling in language models, highlighting key innovations: feature-level prediction, L2-based verification, and adaptive computation. Our method reduces inference cost while maintaining generation quality.}

\vspace{-4mm}
\label{fig:Method}
\end{figure*}

\vspace{-3mm}
\subsection{Speculative Sampling}
\vspace{-1mm}
Speculative decoding has emerged as an effective approach for accelerating large language models (LLMs) while preserving output quality. The core idea, introduced by Leviathan et al.\cite{leviathanFastInferenceTransformers2023}, employs a draft-then-verify mechanism where a smaller model proposes candidate tokens that are efficiently verified in parallel by the main model. Subsequent research has refined this paradigm: SpecInfer\cite{miaoSpecInferAcceleratingGenerative2024} introduced adaptive draft lengths, while Medusa~\cite{caiMedusaSimpleLLM2024} enhanced parallelism through multiple decoding heads. Recent advances have expanded the efficiency frontier of speculative methods. SpecTr~\cite{sunSpecTrFastSpeculative2024} incorporated optimal transport theory for improved batch verification, and Sequoia~\cite{chenSequoiaScalableRobust2024} developed hardware-optimized tree-based verification. These approaches effectively leverage parallel computation to circumvent sequential dependencies.
The speculative paradigm has demonstrated versatility beyond text generation domains. SpecVidGen~\cite{sahariaPhotorealisticTexttoImageDiffusion2022}  successfully adapted these techniques to video and image generation tasks, respectively, establishing the broader applicability of parallel prediction and verification mechanisms across different generative modalities.

Despite these advances, applying speculative sampling to diffusion models presents unique challenges due to the continuous nature of diffusion states and complex feature dependencies across timesteps. Current diffusion acceleration methods primarily rely on caching approaches, which often struggle with error accumulation and quality degradation. This gap motivates our work, which introduces \textit{SpeCa}, a novel framework that bridges feature prediction with speculative principles through a coherent ``\textit{forecast-then-verify}'' mechanism, enabling efficient validation and dynamic acceptance of predicted features across multiple timesteps.

\vspace{-2mm}

\section{Method}
\vspace{-1mm}

\subsection{Preliminary}
\vspace{-1.5mm}

\subsubsection{Diffusion Models.}  
Diffusion models generate structured data by progressively transforming noise into meaningful data through iterative denoising steps. The core mechanism models the conditional probability distribution at each timestep as a Gaussian. Specifically, the model predicts the mean and variance for \(x_{t-1}\) given \(x_t\) at timestep \(t\). The process is expressed as:

\begin{equation}
p_\theta(x_{t-1} | x_t) = \mathcal{N} \left( x_{t-1}; \frac{1}{\sqrt{\alpha_t}} \left( x_t - \frac{1 - \alpha_t}{\sqrt{1 - \bar{\alpha}_t}} \tau_\theta(x_t, t) \right), \beta_t \mathbf{I} \right),
\end{equation}
where \(\mathcal{N}\) denotes a normal distribution, \(\alpha_t\) and \(\beta_t\) are time-dependent parameters, and \(\tau_\theta(x_t, t)\) is the predicted noise at timestep \(t\). The process starts with a noisy image and iteratively refines it by sampling from these distributions until a clean sample is obtained.

\vspace{-1.5mm}
\subsubsection{Diffusion Transformer Architecture.}

The Diffusion Transformer (DiT)~\cite{DiT} employs a hierarchical structure $\mathcal{G} = g_1 \circ g_2 \circ \cdots \circ g_L$, where each module $g_l = \mathcal{F}_{\text{SA}}^l \circ \mathcal{F}_{\text{CA}}^l \circ \mathcal{F}_{\text{MLP}}^l$ consists of self-attention (SA), cross-attention (CA), and multilayer perceptron (MLP) components. In DiT, these components are dynamically adapted over time to accommodate varying noise levels during the image generation process. The input $\mathbf{x}t = \{x_i\}_{i=1}^{H \times W}$ is represented as a sequence of tokens that correspond to image patches. Each module integrates information through residual connections, defined as $\mathcal{F}(\mathbf{x}) = \mathbf{x} + \text{AdaLN} \circ f(\mathbf{x})$, where AdaLN refers to adaptive layer normalization, which facilitates more effective learning.

\vspace{-3mm}
\subsubsection{Speculative Decoding for Diffusion Models}

Diffusion models suffer from high computational cost due to their sequential sampling. To accelerate inference, speculative decoding employs a lightweight draft model to predict a $\gamma$-step future trajectory, which is then verified in parallel by the main model in one forward pass. The resulting effective distribution is:
$
p_{\text{spec}}(\mathbf{x}_0 \mid \mathbf{x}_T) = \sum_{\{\tilde{\mathbf{x}}_{t_i}\}_{i=1}^{\gamma}} \left[ p_{\text{verify}}\!\left( \{\varepsilon_\theta(\tilde{\mathbf{x}}_{t_i}, t_i)\}_{i=1}^{\gamma} \Rightarrow \varepsilon^*_\theta(\mathbf{x}_t, t) \right) \cdot p_{\text{draft}}\!\left( \{\tilde{\mathbf{x}}_{t_i}\}_{i=1}^{\gamma} \mid \mathbf{x}_T \right) \right]
$
Here, $p_{\text{draft}}$ is the draft model's probability of generating the trajectory, and $p_{\text{verify}}$ is the probability the main model accepts it based on noise prediction consistency. This achieves up to a $\gamma$-fold speedup while preserving the main model's output quality.

\vspace{-3mm}
\subsection{SpeCa Framework Overview}
\vspace{-1mm}
Diffusion models incur high computational costs due to their sequential sampling, requiring full computation at each timestep. To mitigate this, we introduce \textit{SpeCa}, a framework that adapts the predict-validate paradigm to accelerate inference.

\textit{SpeCa} intelligently forecasts feature representations for multiple future timesteps and employs a rigorous validation mechanism, significantly reducing computational overhead while preserving generation quality. Unlike existing feature caching methods that often accumulate errors by ignoring temporal dynamics, our approach explicitly models feature evolution across timesteps. The core \textit{SpeCa} workflow follows a ``forecast-verify'' paradigm:

\begin{enumerate}
    \item Execute a complete forward computation at a key time step $t$ to obtain accurate feature representation $\mathcal{F}(x_t^l)$
    \item Based on this feature representation, predict the features for subsequent $k$ time steps $\{t-1, t-2, \ldots, t-k\}$
    \item Evaluate the reliability of the predicted features through an error validation mechanism to decide whether to accept or reject each prediction step
\end{enumerate}
\vspace{-2mm}
This framework generates predicted features using a lightweight draft model and validates them using the main model, forming a dynamically adaptive inference path.

\vspace{-4mm}
\subsection{Draft Model: TaylorSeer Predictor}
\vspace{-1mm}
\textit{SpeCa} uses TaylorSeer as its draft model, which applies Taylor series expansion to predict features across time steps using current features and their derivatives. TaylorSeer requires no training while providing a rigorous and efficient prediction solution.
For a feature $\mathcal{F}(x_t^l)$ at the current time step $t$, TaylorSeer effectively predicts the feature evolution for subsequent $k$ time steps by leveraging temporal patterns in the diffusion process. This mathematical approach captures the underlying dynamics of feature transformation as:
\begin{equation}
\mathcal{F}_{\textrm{pred}}(x_{t-k}^l) = \mathcal{F}(x_t^l) + \sum_{i=1}^{m} \frac{\Delta^i \mathcal{F}(x_t^l)}{i! \cdot N^i}(-k)^i
\end{equation}
\vspace{-1mm}

where $\Delta^i \mathcal{F}(x_t^l)$ represents the $i$-th order finite difference of the feature, capturing essential patterns across sampling points. This term is computed through the expression:
\begin{equation}
\Delta^i \mathcal{F}(x_t^l) = \sum_{j=0}^{i} (-1)^{i-j} \binom{i}{j} \mathcal{F}(x_{t+jN}^l)
\end{equation}
\vspace{-2mm}

This formulation approximates higher-order derivatives with $N$ as the sampling interval and $k$ as the time step difference, enabling accurate and efficient feature prediction.

\vspace{-3mm}
\subsection{Error Computation and Validation}
\vspace{-1mm}
\subsubsection{Relative Error Measure}
We quantify prediction quality using relative error $e_k$, which measures deviation between predicted and actual features:
\begin{equation}
e_k = \frac{|\mathcal{F}_{\text{pred}}(x_{t-k}^l) - \mathcal{F}(x_{t-k}^l)|_2}{|\mathcal{F}(x_{t-k}^l)|_2 + \varepsilon}
\end{equation}
where $\varepsilon=10^{-8}$ prevents division by zero. This metric better reflects generation quality impact than absolute errors.

\subsubsection{Sequential Validation Mechanism}

Our validation follows a sequential decision procedure for predictions $\{t-1, t-2, ..., t-K\}$. At each step, we compare error $e_k$ with threshold $\tau_t$:

\begin{enumerate} 
\item If $e_k \leq \tau_t$, the prediction for that time step is accepted, and validation proceeds to the next step; 
\item If $e_k > \tau_t$, the current and all subsequent predictions are rejected, reverting to standard computation.
\end{enumerate}

Formally, for the prediction sequence $\{t-1, t-2, ..., t-K\}$, the process produces an acceptance set $\mathcal{A}$ and a rejection set $\mathcal{R}$:
\vspace{-1mm}
\begin{align}
\mathcal{A} &= \{t-k \mid e_k \leq \tau_t, 1 \leq k \leq j\} \\
\mathcal{R} &= \{t-k \mid k > j\} \cup \{t-j\} \text{ if } e_j > \tau_t
\end{align}
\vspace{-1mm}
To accommodate varying difficulty across diffusion stages, we apply an adaptive threshold: $\tau_t = \tau_0 \cdot \beta^{\frac{T-t}{T}}$, where $\tau_0$ is initial threshold, $\beta \in (0, 1)$ controls decay, and $T$ is total steps. This allows more speculative execution in early noisy stages while enforcing stricter checks as fine details emerge.

\vspace{-4mm}
\subsection{Computational Complexity Analysis}
\vspace{-1mm}
Diffusion models traditionally require \( T \) sequential sampling steps, each with a complete forward computation, resulting in a total complexity of \( O(T \cdot C) \), where \( C \) represents the per-step computational cost. \textit{SpeCa} significantly reduces this computational burden through a strategic combination of speculative sampling and efficient validation mechanisms. In the \textit{SpeCa} framework, the \( T \) sampling steps are dynamically divided into two categories:

\begin{enumerate}
    \item \textbf{Complete computation steps}: Execute the complete forward computation of the diffusion model
    \item \textbf{Speculative prediction steps}: Use TaylorSeer to predict features and perform lightweight validation
\end{enumerate}

Let $T_{\text{full}}$ and $T_{\text{spec}}$ denote the number of complete computation steps and speculative prediction steps, respectively, with $T = T_{\text{full}} + T_{\text{spec}}$. The computational cost of \textit{SpeCa} can be formally expressed as: $O(T_{\text{full}} \cdot C + T_{\text{spec}} \cdot C_{\text{pred}} + T_{\text{spec}} \cdot C_{\text{verify}})$. $C$ represents the cost of a full computation step, while $C_{\text{pred}}$ and $C_{\text{verify}}$ represent the computational complexity of prediction and validation operations.

Let the prediction acceptance rate be \( \alpha = \frac{T_{\text{spec}}}{T} \), and we express \( T_{\text{full}} = T \cdot (1-\alpha) \). The TaylorSeer predictor incurs negligible overhead (\( C_{\text{pred}} \ll C \)) and requires no additional training. In modern diffusion architectures, validation only involves comparing features in the final layer, costing \( C_{\text{verify}} \approx \gamma \cdot C \), where \( \gamma \ll 1 \) (typically \( \gamma < 0.05 \)). Thus, the total complexity simplifies to:

\vspace{-3mm}
\begin{equation}
O(T \cdot (1-\alpha) \cdot C + T \cdot \alpha \cdot \gamma \cdot C) = O(T \cdot C \cdot (1-\alpha + \alpha \cdot \gamma)) 
\end{equation}

Consequently, the theoretical acceleration ratio of SpeCa over traditional diffusion inference is:
\begin{equation}
\mathcal{S} = \frac{T \cdot C}{T \cdot C \cdot (1-\alpha + \alpha \cdot \gamma)} = \frac{1}{1-\alpha + \alpha \cdot \gamma} 
\end{equation}
Since $\gamma \ll 1$, when the prediction acceptance rate $\alpha$ is relatively high, the speedup can be approximated as: $\mathcal{S} \approx \frac{1}{1-\alpha}$. This efficiency arises from \textit{SpeCa}'s ability to reduce full computations through accurate predictions, while using lightweight validation to maintain generation quality. Unlike methods that require retraining, \textit{SpeCa} can be directly applied to existing diffusion models, providing significant computational savings in practical use.

\vspace{-3mm}

\section{Experiments}

\begin{table*}[htbp]
\centering
\caption{\textbf{Quantitative comparison in text-to-image generation} for FLUX on Image Reward.}
\vspace{-4mm}
\setlength\tabcolsep{5pt}
  \small
  \resizebox{0.98\textwidth}{!}{
  \begin{tabular}{l | c | c  c | c  c | c | c}
    \toprule
    {\bf Method} & {\bf Efficient} &\multicolumn{4}{c|}{\bf Acceleration} &{\bf Image Reward $\uparrow$} &\bf Geneval$\uparrow$ \\
    \cline{3-6}
    {\bf FLUX.1\citep{flux2024}} & {\bf Attention \citep{Dao2022FlashAttentionFA}} & {\bf Latency(s) $\downarrow$} & {\bf Speed $\uparrow$} & {\bf FLOPs(T) $\downarrow$}  & {\bf Speed $\uparrow$} & \bf DrawBench &\bf Overall \\
    \midrule
  
  $\textbf{[dev]: 50 steps}$  & \ding{52}  &  {17.84}  & {1.00$\times$} & {3719.50}   & {1.00$\times$} & {0.9898 \textcolor{gray!70}{\scriptsize (+0.0000)}}  &0.6752 \textcolor{gray!70}{\scriptsize (+0.0000)}      \\ 

  {$60\%$\textbf{ steps}}  & \ding{52}  &  {10.84}  & {1.65$\times$} & {2231.70}   & {1.67$\times$} & {0.9739 \textcolor{gray!70}{\scriptsize (-0.0159)}}  &0.6546 \textcolor{gray!70}{\scriptsize (-0.0206)}     \\

  {$50\%$\textbf{ steps}}   & \ding{52}  &  {9.03}   & {1.98$\times$} & {1859.75}  & {2.00$\times$} & {0.9429 \textcolor{gray!70}{\scriptsize (-0.0469)}}  &0.6655 \textcolor{gray!70}{\scriptsize (-0.0097)}             \\
  {$40\%$\textbf{ steps}}   & \ding{52}  &  {7.15}   & {2.50$\times$} & {1487.80}   & {2.62$\times$} & {0.9317 \textcolor{gray!70}{\scriptsize (-0.0581)}}  &0.6606 \textcolor{gray!70}{\scriptsize (-0.0146)}            \\
  {$34\%$\textbf{ steps}}   & \ding{52}  &  {6.12}   & {2.92$\times$} & {1264.63}   & {3.13$\times$} & {0.9346 \textcolor{gray!70}{\scriptsize (-0.0552)}}  & 0.6501 \textcolor{gray!70}{\scriptsize (-0.0251)}            \\

  {$\Delta$-DiT} ($\mathcal{N}=2$)\citep{chen2024delta-dit}  & \ding{52}  &  {12.07}  & {1.48$\times$} & {2480.01}   & {1.50$\times$} & {0.9316 \textcolor{gray!70}{\scriptsize (-0.0582)}}  &0.6700 \textcolor{gray!70}{\scriptsize (-0.0052)}      \\
  
  {$\Delta$-DiT} ($\mathcal{N}=3$) {\textcolor{red}{†}} \citep{chen2024delta-dit} & \ding{52}  &  {9.21}  & {1.94$\times$} & {1686.76}   & {2.21$\times$} & {0.8561 \textcolor{gray!70}{\scriptsize (-0.1337)}}  & 0.6379 \textcolor{gray!70}{\scriptsize (-0.0373)}      \\
  \midrule
  
  $\textbf{FORA}$ $(\mathcal{N}=6)$ {\textcolor{red}{†}} \citep{selvaraju2024fora}& \ding{52}  &  4.02  & 4.44{$\times$} & {893.54}   & {4.16$\times$} & {0.8235 \textcolor{gray!70}{\scriptsize (-0.1663)}}    &0.5940 \textcolor{gray!70}{\scriptsize (-0.0812)}  \\
  
    $\textbf{\texttt{ToCa}}$ $(\mathcal{N}=8, \mathcal{N}=90\%)$ {\textcolor{red}{†}}  \citep{zou2024accelerating} & \ding{56}  &  5.94  & 3.00{$\times$} & {784.54}   & {4.74$\times$} & {0.9086 \textcolor{gray!70}{\scriptsize (-0.0812)}}   &  0.6347 \textcolor{gray!70}{\scriptsize (-0.0405)}  \\

  $\textbf{\texttt{DuCa}}$ $(\mathcal{N}=8, \mathcal{N}=70\%)$ \citep{zou2024DuCa} & \ding{52}  &  {7.59}    & {2.35$\times$} & {838.61}   & {4.44$\times$} &{0.8905 \textcolor{gray!70}{\scriptsize (-0.0993)}}   & 0.6361 \textcolor{gray!70}{\scriptsize (-0.0391)}  \\                         
  
  \textbf{TeaCache} $({l}=0.8)$ {\textcolor{red}{†}}  \citep{liu2024timestep}& \ding{52}  & 4.98 & 3.58$\times$ & 892.35 & 4.17$\times$  &  0.8683 \textcolor{gray!70}{\scriptsize (-0.1215)} & 0.6356 \textcolor{gray!70}{\scriptsize (-0.0396)}\\ 

  $\textbf{TaylorSeer} $ $(\mathcal{N}=5,O=2)$ \citep{liuReusingForecastingAccelerating2025}& \ding{52}  &    6.17 &  2.89{$\times$} &  {893.54}  &  {4.16$\times$} &0.9992 \textcolor{gray!70}{\scriptsize (+0.0094)}   &0.6266 \textcolor{gray!70}{\scriptsize (-0.0486)} \\

  \rowcolor{gray!20}
  $\textbf{SpeCa} $  & \ding{52}  & 4.93  & 3.62$\times$ &  791.40  & 4.70$\times$ &  \textbf{0.9998 \textcolor[HTML]{0f98b0}{\scriptsize (+0.0100)}} &  \textbf{0.6397 \textcolor[HTML]{0f98b0}{\scriptsize (+0.0355)}}\\

  \midrule

  $\textbf{FORA}$ $(\mathcal{N}=7)$ {\textcolor{red}{†}} \citep{selvaraju2024fora}& \ding{52}  & 3.98   & 4.48{$\times$} &  670.44 & {5.55$\times$} & 0.7398 \textcolor{gray!70}{\scriptsize (-0.2500)}  & 0.5678 \textcolor{gray!70}{\scriptsize (-0.1074)}\\
  
  $\textbf{\texttt{ToCa}}$ $(\mathcal{N}=10,\mathcal{N}=90\%)$ {\textcolor{red}{†}}  \citep{zou2024accelerating} & \ding{56}  &  5.65  & 3.16{$\times$} & 714.66  & {5.20$\times$} &  0.7131 \textcolor{gray!70}{\scriptsize (-0.2767)} & 0.6026 \textcolor{gray!70}{\scriptsize (-0.0726)}\\
  
  $\textbf{\texttt{DuCa},}$ $(\mathcal{N}=9, \mathcal{N}=90\%)$ {\textcolor{red}{†}} \citep{zou2024DuCa} & \ding{52}  &  4.69  & 3.80{$\times$} &  690.26 & {5.39$\times$} & 0.8601 \textcolor{gray!70}{\scriptsize (-0.1297)}  &  0.6189 \textcolor{gray!70}{\scriptsize (-0.0563)}\\      
  
  \textbf{TeaCache}$({l}=1.2)$  {\textcolor{red}{†}}  \citep{liu2024timestep}& \ding{52}  & 3.98 & 4.48{$\times$} &669.27 & 5.56$\times$ &  0.7351 \textcolor{gray!70}{\scriptsize (-0.2547)} & 0.5845 \textcolor{gray!70}{\scriptsize (-0.0907)}\\ 

  $\textbf{TaylorSeer} $ $(\mathcal{N}=7,O=2)$ {\textcolor{red}{†}} \citep{liuReusingForecastingAccelerating2025}& \ding{52}  &   5.02 & 3.55{$\times$}  & 670.44  & {5.55$\times$} & 0.9331 \textcolor{gray!70}{\scriptsize (-0.0567)}  & 0.5886 \textcolor{gray!70}{\scriptsize (-0.0866)}\\
  
  \rowcolor{gray!20}
  
  $\textbf{SpeCa} $& \ding{52}  &  4.14&  4.31{$\times$}  &  \textbf{652.14}  & \textbf{5.70}$\times$ & \textbf{0.9717 \textcolor[HTML]{0f98b0}{\scriptsize (-0.0181)}} & \textbf{0.6338 \textcolor[HTML]{0f98b0}{\scriptsize (-0.0414)}} \\
  \midrule

  $\textbf{FORA}$ $(\mathcal{N}=9)$ {\textcolor{red}{†}} \citep{selvaraju2024fora}& \ding{52}  &   3.36 & 5.31{$\times$} &  596.07 & {6.24$\times$} &  0.5442 \textcolor{gray!70}{\scriptsize (-0.4456)} & 0.5199 \textcolor{gray!70}{\scriptsize (-0.1553)}\\
  
  $\textbf{\texttt{ToCa}}$ $(\mathcal{N}=12, \mathcal{N}=90\%)$ {\textcolor{red}{†}}  \citep{zou2024accelerating} & \ding{56}  &  5.27  & 3.39{$\times$} & 644.70  & {5.77$\times$} &  0.7131 \textcolor{gray!70}{\scriptsize (-0.2767)} &  0.5479 \textcolor{gray!70}{\scriptsize (-0.1273)}\\
  
  $\textbf{\texttt{DuCa}}$ $(\mathcal{N}=12, \mathcal{N}=80\%)$ {\textcolor{red}{†}} \citep{zou2024DuCa} & \ding{52}  & 4.18  & 4.27{$\times$} & 606.91  & {6.13$\times$} &  0.6753 \textcolor{gray!70}{\scriptsize (-0.3145)} & 0.5561 \textcolor{gray!70}{\scriptsize (-0.1191)}\\  
  
  \textbf{TeaCache} $({l}=1.4)$ {\textcolor{red}{†}} \citep{liu2024timestep}& \ding{52}  & 3.63 & 4.91$\times$  & 594.90& 6.25$\times$  & 0.7346 \textcolor{gray!70}{\scriptsize (-0.2552)}  & 0.5524 \textcolor{gray!70}{\scriptsize (-0.1228)}\\ 

  $\textbf{TaylorSeer} $ $(\mathcal{N}=9,O=2)$ {\textcolor{red}{†}} \citep{liuReusingForecastingAccelerating2025}& \ding{52}  & 4.72   & 3.78{$\times$}  & 596.07  & {6.24$\times$} & 0.8168 \textcolor{gray!70}{\scriptsize (-0.1730)}  & 0.5380 \textcolor{gray!70}{\scriptsize (-0.1372)}\\
  \rowcolor{gray!20}
  $\textbf{SpeCa} $  & \ding{52}  &  3.81  &  4.68{$\times$} &  \textbf{586.93} & \textbf{6.34}$\times$& \textbf{0.9355 \textcolor[HTML]{0f98b0}{\scriptsize (-0.0543)}} & \textbf{0.5922 \textcolor[HTML]{0f98b0}{\scriptsize (-0.0830)}}\\

    \bottomrule
  \end{tabular}}
  
  \label{table:FLUX-Metrics}
\footnotesize
\begin{itemize}
\item \textcolor{red}{†} Methods exhibit significant degradation in Image Reward, leading to severe deterioration in image quality.
\item \textcolor{gray!70}{gray}: Deviation from FLUX.1-dev 50stpes baseline (ImageReward=0.9898, Geneval Overall=0.6752). \textcolor[HTML]{0f98b0}{Blue}: \textbf{SpeCa}'s superior performance with minimal quality loss.
\end{itemize}
\vspace{-5mm}
\end{table*}

\begin{table}[htbp]
\centering
\caption{\textbf{Quantitative comparison in text-to-video generation} for HunyuanVideo on VBench.}
\vspace{-4mm}
\setlength\tabcolsep{1pt} 
\small
\resizebox{1.0\linewidth}{!}{
\begin{tabular}{l |c c| c c| c}
\toprule
\textbf{Method} & \textbf{Latency(s) $\downarrow$} & \textbf{Speed $\uparrow$} & \textbf{FLOPs(T) $\downarrow$} & \textbf{Speed $\uparrow$} & \textbf{VBench $\uparrow$} \\
\midrule
\textbf{50-steps} & 145.00 & 1.00$\times$ & 29773.0 & 1.00$\times$ & 80.66 \\ 

\textbf{22\% steps} & 31.87 & 4.55$\times$ & 6550.1 & 4.55$\times$ & 78.74 \\
\midrule
\textbf{TeaCache\textsuperscript{\textcolor{red}{1}}} & 30.49 & 4.76$\times$ & 6550.1 & 4.55$\times$ & 79.36 \\
\textbf{FORA} & 34.39 & 4.22$\times$ & 5960.4 & 5.00$\times$ & 78.83 \\
\textbf{ToCa} & 38.52 & 3.76$\times$ & 7006.2 & 4.25$\times$ & 78.86 \\
\textbf{DuCa} & 31.69 & 4.58$\times$ & 6483.2 & 4.62$\times$ & 78.72 \\
\textbf{TeaCache\textsuperscript{\textcolor{red}{2}}} & 26.61 & 5.45$\times$ & 5359.1 & 5.56$\times$ & 78.32 \\
\textbf{TaylorSeer\textsuperscript{\textcolor{red}{1}}} & 34.84 & 4.16$\times$ & 5960.4 & 5.00$\times$ & 79.93 \\
\rowcolor{gray!20}
\textbf{SpeCa\textsuperscript{\textcolor{red}{1}}} & 34.58 & 4.19$\times$ & \textbf{5692.7} & \textbf{5.23$\times$} & \textbf{79.98} \\
\textbf{TaylorSeer\textsuperscript{\textcolor{red}{2}}} & 31.69 & 4.58$\times$ & 5359.1 & 5.56$\times$ & 79.78 \\
\rowcolor{gray!20}
\textbf{SpeCa\textsuperscript{\textcolor{red}{2}}} & 31.45 & 4.61$\times$ & \textbf{4834.8} & \textbf{6.16$\times$} & \textbf{79.84} \\
\bottomrule
\end{tabular}}
\label{table:HunyuanVideo-Metrics-Compact}
\vspace{-7mm}
\end{table}
\vspace{-1mm}

\subsection{Experiment Settings}
\vspace{-1mm}
\subsubsection{Model Configurations.}
For text-to-image generation, we employ \textbf{FLUX.1-dev}\citep{flux2024}, which utilizes Rectified Flow sampling with 50 steps. This model is evaluated on NVIDIA H800 GPUs. For text-to-video generation, we use \textbf{HunyuanVideo}\citep{li_hunyuan-dit_2024, sun_hunyuan-large_2024}, evaluated on the Hunyuan-Large architecture with 50-step inference, using NVIDIA H800 80GB GPUs. Lastly, for class-conditional image generation, we utilize \textbf{DiT-XL/2}~\cite{DiT}, which employs 50-step DDIM sampling and is evaluated on NVIDIA A800 80GB GPUs. \textit{For detailed model configurations, please refer to the Supplementary Material.}

\begin{figure*}
\centering
\includegraphics[width=\linewidth]{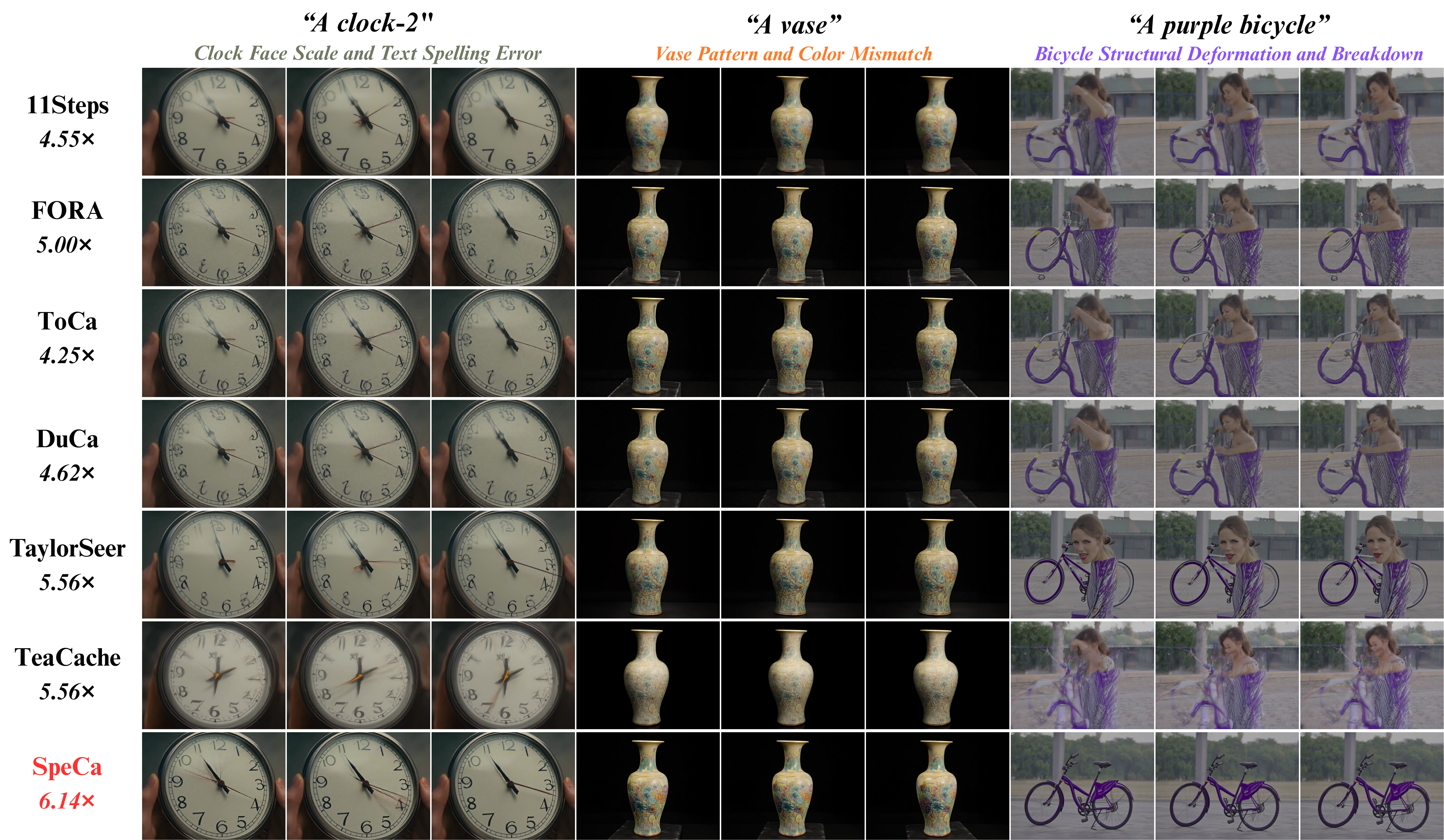}
\vspace{-5mm}
\caption{Comparison of generation methods across objects. While others achieve accuracy but exhibit \textit{spelling errors}, \textit{missing details}, or \textit{distortions}, our method maintains high quality and precision without such flaws.}
\label{fig:Vis-Huanyuan}
\vspace{-4mm}
\end{figure*}
 \vspace{-2mm}

\subsubsection{Evaluation and Metrics.}

For FLUX.1-dev at 1024×1024 resolution, we generated images using 200 high-quality prompts from DrawBench~\cite{sahariaPhotorealisticTexttoImageDiffusion2022}. Image quality was assessed with ImageReward~\cite{xuImageRewardLearningEvaluating2023}, and text alignment using GenEval~\cite{ghoshGenEvalObjectFocusedFramework2023}. For text-to-video, we used VBench~\cite{VBench} with 946 prompts to generate 4,730 videos, evaluated across 16 dimensions. In class-conditional image generation, 50,000 images were sampled across 1,000 ImageNet~\cite{Imagenet} classes and evaluated using FID-50k~\cite{FID50K}, sFID, and InceptionScore.
\begin{figure}
\centering
\includegraphics[width=\linewidth]{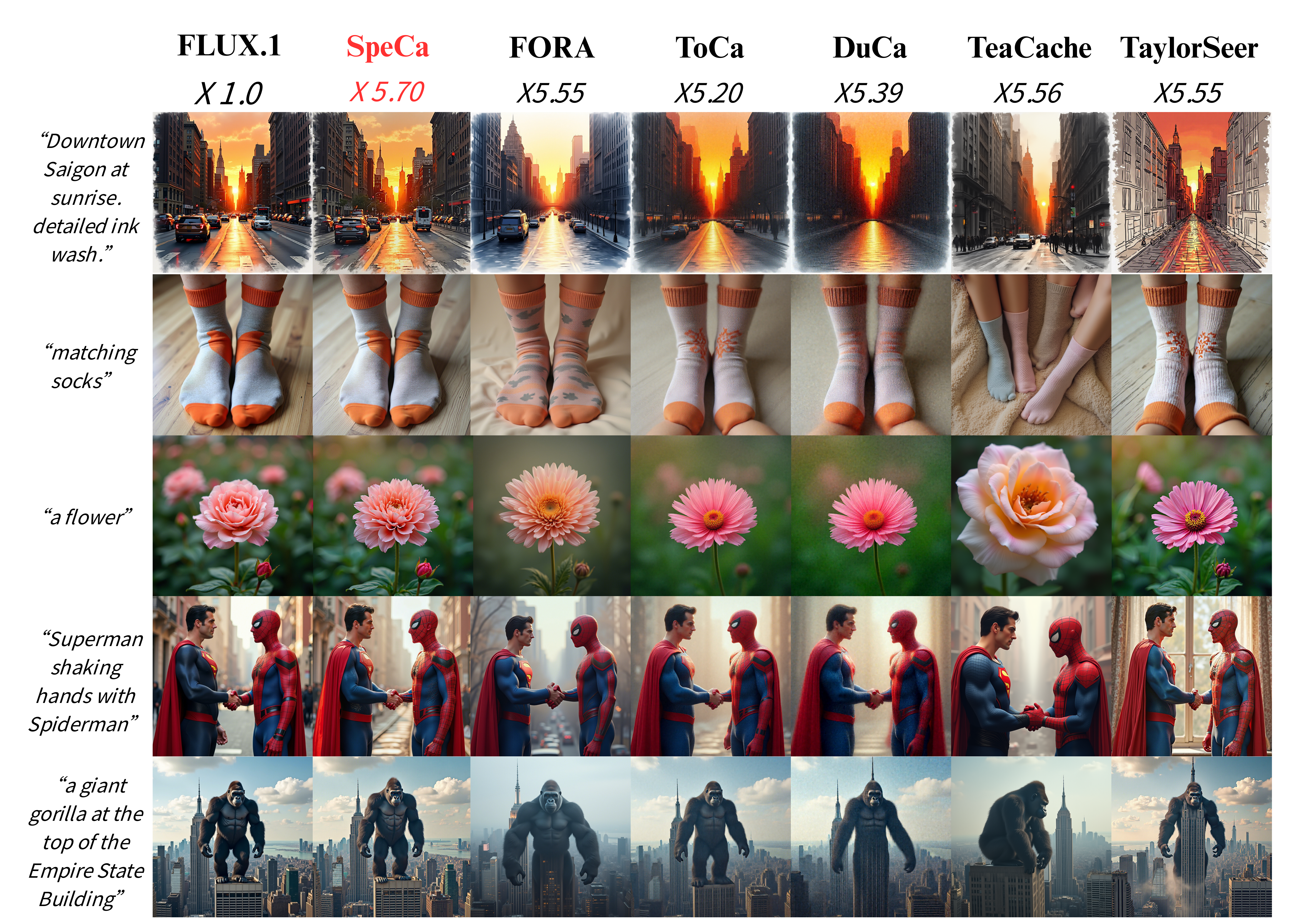}
\vspace{-7mm}
\caption{Text-to-image comparison: SpeCa achieves visual fidelity on par with FLUX.}
\vspace{-11mm}
\label{fig:Vis-FLUX}.
\end{figure}

\vspace{-3mm}
\subsection{Text-to-Image Generation}
\paragraph{Quantitative Study.}
\textit{SpeCa} demonstrates significant superiority over other methods, especially at high acceleration rates. At a 6.34× speedup, \textit{SpeCa} achieves an Image Reward of \textbf{0.9355}, far surpassing other methods such as TeaCache (0.7346) and TaylorSeer (0.8168). Additionally, \textit{SpeCa} achieves a Geneval Overall score of \textbf{0.5922}, also outperforming other high-acceleration methods. These results highlight \textit{SpeCa}'s ability to maintain generation quality while achieving high acceleration, with its advantages becoming more pronounced at higher acceleration rates. In contrast, many traditional methods such as TeaCache and TaylorSeer experience significant degradation in image quality when aiming for high acceleration, leading to a substantial decrease in ImageReward. This performance difference underscores \textit{SpeCa}’s robustness and efficiency in high-speed inference, demonstrating its ability to minimize quality loss while enabling substantial acceleration, making it a clear leader in efficient text-to-image generation.

\vspace{-3mm}
\paragraph{Qualitative Study.}
\textit{SpeCa} maintains visual quality comparable to the baseline model FLUX.1-dev even at an acceleration ratio of \textbf{5.70}$\times$, significantly outperforming other high-acceleration methods. Other models at similar acceleration ratios typically exhibit reduced structural integrity (as shape distortion in ``\textit{matching socks}''), decreased detail precision (like texture blurring in ``\textit{downtown Saigon}''), and object boundary ambiguity (like extra limbs appearing in ``\textit{Superman shaking hands with Spiderman}''). In contrast, \textit{SpeCa} demonstrates remarkable advantages in preserving object morphology, textural details, semantic accuracy, and object boundary integrity, proving its excellence in text-to-image generation.
\vspace{-2mm}
\subsection{Text-to-Video Generation}
\vspace{-1mm}
\paragraph{Quantitative Study.}

\textit{SpeCa} demonstrates exceptional computational efficiency while maintaining high quality. As shown in Table~\ref{table:HunyuanVideo-Metrics-Compact}, our base configuration achieves \textbf{5692.7 TFLOPs} with \textbf{5.23}$\times$ acceleration and \textbf{79.98}\% VBench, outperforming TaylorSeer ($\mathcal{N}$=5, $\mathcal{O}$=1) at 5.00$\times$ acceleration and 79.93\% quality. \textit{SpeCa} outperforms step reduction (78.74\%), TeaCache (79.36\%), ToCa (78.86\%), and DuCa (78.72\%), despite similar acceleration ratios. In the enhanced configuration, computational cost is further reduced to 4834.8 TFLOPs with 6.16$\times$ speedup, while maintaining 79.84\% quality, showcasing the effectiveness of speculative sampling and sample-adaptive computation in balancing efficiency and output quality.

\begin{table*}[htbp]
\centering
\caption{\textbf{Quantitative comparison on class-to-image generation} on ImageNet with \text{DiT-XL/2.}}
\vspace{-4mm}
\setlength\tabcolsep{6pt} 
\small
\begin{tabular}{l | c | c c c | c c |c}
\toprule
\bf Method  & \makecell{\bf Efficient \\ \bf Attention} & \bf Latency(s) $\downarrow$ & \bf FLOPs(T) $\downarrow$ & \bf Speed $\uparrow$  & \bf FID $\downarrow$ & \bf sFID $\downarrow$ &   \makecell{\bf Inception\\ \bf Score} $\uparrow$  \\
\midrule
{\textbf{$\text{DDIM-50 steps}$}} & \ding{52}& {0.995}  & {23.74}  & {1.00$\times$}  &  {{2.32}}\textcolor{gray!70}{\scriptsize (+0.00)} &  {{4.32}}\textcolor{gray!70}{\scriptsize (+0.00)}   &{241.25}\textcolor{gray!70}{\scriptsize (+0.00)}\\
{\textbf{$\text{DDIM-25 steps}$}} & \ding{52}& {0.537}  & {11.87}  & {2.00$\times$}  &  {{3.18 \textcolor{gray!70}{\scriptsize (+0.86)}}} &  {{4.74 \textcolor{gray!70}{\scriptsize (+0.42)}}}   &{232.01 \textcolor{gray!70}{\scriptsize (-9.24)}}\\
{\textbf{$\text{DDIM-20 steps}$}}\textcolor{red}{†} & \ding{52}& {0.406}  & {9.49}  & {2.50$\times$}  &  {{3.81 \textcolor{gray!70}{\scriptsize (+1.49)}}} &  {{5.15 \textcolor{gray!70}{\scriptsize (+0.83)}}} &{221.43 \textcolor{gray!70}{\scriptsize (-19.82)}}\\
{\textbf{$\text{DDIM-12 steps}$}} {\textcolor{red}{†}} & \ding{52}& {0.128}  & {5.70}  & {4.17$\times$}  &  {{7.80 \textcolor{gray!70}{\scriptsize (+5.48)}}} &  {{8.03 \textcolor{gray!70}{\scriptsize (+3.71)}}} &{184.50 \textcolor{gray!70}{\scriptsize (-56.75)}}\\
{\textbf{$\text{DDIM-10 steps}$}} {\textcolor{red}{†}} & \ding{52}& {0.224}  & {4.75}  & {5.00$\times$}  &  {{12.15 \textcolor{gray!70}{\scriptsize (+9.83)}}} &  {{11.33 \textcolor{gray!70}{\scriptsize (+7.01)}}} &{159.13 \textcolor{gray!70}{\scriptsize (-82.12)}}\\
{\textbf{$\text{DDIM-8 steps}$}}{\textcolor{red}{†}}& \ding{52}& {0.189} & 3.80 & 6.25$\times$ & 23.13 \textcolor{gray!70}{\scriptsize (+20.81)} & 19.23 \textcolor{gray!70}{\scriptsize (+14.91)} & 120.58 \textcolor{gray!70}{\scriptsize (-120.67)}\\
{\textbf{$\text{DDIM-7 steps}$}}{\textcolor{red}{†}}& \ding{52}& {0.168} & 3.32 & 7.14$\times$  & 33.65 \textcolor{gray!70}{\scriptsize (+31.33)} & 27.51 \textcolor{gray!70}{\scriptsize (+23.19)}& 92.74 \textcolor{gray!70}{\scriptsize (-148.51)}\\
{\textbf{$\text{$\Delta$-DiT}$}}($\mathcal{N}=2$)  & \ding{52}& {0.740}  & {18.04}  & {1.31$\times$}  &  {{2.69 \textcolor{gray!70}{\scriptsize (+0.37)}}} &  {{4.67 \textcolor{gray!70}{\scriptsize (+0.35)}}} &{225.99 \textcolor{gray!70}{\scriptsize (-15.26)}}\\
{\textbf{$\text{$\Delta$-DiT}$}}($\mathcal{N}=3$) \textcolor{red}{†}  & \ding{52}& {0.658}  & {16.14}  & {1.47$\times$}  &  {3.75 \textcolor{gray!70}{\scriptsize (+1.43)}} &  {5.70 \textcolor{gray!70}{\scriptsize (+1.38)}} &{207.57 \textcolor{gray!70}{\scriptsize (-33.68)}}\\

\midrule     

\textbf{FORA} ($\mathcal{N}=6$) {\textcolor{red}{†}}& \ding{52} & 0.427 & 5.24 & {4.98$\times$}  & 9.24 \textcolor{gray!70}{\scriptsize (+6.92)} & 14.84 \textcolor{gray!70}{\scriptsize (+10.52)}  &{171.33 \textcolor{gray!70}{\scriptsize (-69.92)}}\\

\textbf{\texttt{ToCa}} ($\mathcal{N}=9,\mathcal{N}=95\%$)  \textcolor{red}{†} & \ding{56}& {1.016} & 6.34 & 3.75$\times$  & 6.55 \textcolor{gray!70}{\scriptsize (+4.23)} & 7.10 \textcolor{gray!70}{\scriptsize (+2.78)} & 189.53 \textcolor{gray!70}{\scriptsize (-51.72)} \\

\textbf{\texttt{DuCa}} ($\mathcal{N}=6,\mathcal{N}=95\%$) \textcolor{red}{†} & \ding{52}& {0.817} & 5.81 & 4.08$\times$  & 6.40 \textcolor{gray!70}{\scriptsize (+4.08)} & 6.71 \textcolor{gray!70}{\scriptsize (+2.39)} & 188.42 \textcolor{gray!70}{\scriptsize (-52.83)} \\

$\textbf{TaylorSeer} $ $(\mathcal{N}=6,O=4)$ & \ding{52}& {0.603} &4.76& 4.98$\times$  & 3.09 \textcolor{gray!70}{\scriptsize (+0.77)}& 6.50 \textcolor{gray!70}{\scriptsize (+2.18)}& 225.16 \textcolor{gray!70}{\scriptsize (-16.09)} \\

\rowcolor{gray!20}
\textbf{SpeCa} & \ding{52}& {0.476} & \textbf{4.76} & \textbf{4.99}$\times$  & \textbf{2.72 \textcolor[HTML]{0f98b0}{\scriptsize (+0.40)}} & \textbf{5.51 \textcolor[HTML]{0f98b0}{\scriptsize (+1.19)}} & \textbf{233.85 \textcolor[HTML]{0f98b0}{\scriptsize (-7.40)}} \\

\midrule 

\textbf{FORA} ($\mathcal{N}=7$) {\textcolor{red}{†}}& \ding{52}& {0.405} & 3.82 & 6.22$\times$ & 12.55 \textcolor{gray!70}{\scriptsize (+10.23)} & 18.63 \textcolor{gray!70}{\scriptsize (+14.31)} &148.44 \textcolor{gray!70}{\scriptsize (-92.81)} \\

\textbf{\texttt{ToCa}} ($\mathcal{N}=13,\mathcal{N}=95\%$)  \textcolor{red}{†} & \ding{56}& {1.051} & 4.03 & 5.90$\times$ & 21.24 \textcolor{gray!70}{\scriptsize (+18.92)} & 19.93 \textcolor{gray!70}{\scriptsize (+15.61)} & 116.08 \textcolor{gray!70}{\scriptsize (-125.17)} \\

\textbf{\texttt{DuCa}} ($\mathcal{N}=12,\mathcal{N}=95\%$){\textcolor{red}{†}}  & \ding{52}& {0.698} & 3.94 & 6.02$\times$ & 31.97 \textcolor{gray!70}{\scriptsize (+29.65)} & 27.26 \textcolor{gray!70}{\scriptsize (+22.94)}& 87.94 \textcolor{gray!70}{\scriptsize (-153.31)}\\
$\textbf{TaylorSeer} $ $(\mathcal{N}=8,O=4)$ {\textcolor{red}{†}} & \ding{52}& {0.580} & 3.82 & 6.22$\times$  & 4.42 \textcolor{gray!70}{\scriptsize (+2.10)}& 7.75 \textcolor{gray!70}{\scriptsize (+3.43)}& 205.16 \textcolor{gray!70}{\scriptsize (-36.09)} \\

\rowcolor{gray!20}
\textbf{SpeCa} & \ding{52}& {0.436} & \textbf{3.51} & \textbf{6.76}$\times$  & \textbf{3.76 \textcolor[HTML]{0f98b0}{\scriptsize (+1.44)}} & \textbf{7.13 \textcolor[HTML]{0f98b0}{\scriptsize (+2.81)}} & \textbf{217.10 \textcolor[HTML]{0f98b0}{\scriptsize (-24.15)}} \\

\midrule 

\textbf{FORA} ($\mathcal{N}=8,\mathcal{N}=95\%$) {\textcolor{red}{†}}& \ding{52}& {0.405} & 3.34 & 7.10$\times$  & 15.31 \textcolor{gray!70}{\scriptsize (+12.99)}& 21.91 \textcolor{gray!70}{\scriptsize (+17.59)}& 136.21 \textcolor{gray!70}{\scriptsize (-105.04)} \\

\textbf{\texttt{ToCa}} ($\mathcal{N}=13,\mathcal{N}=98\%$)  \textcolor{red}{†} & \ding{56}& {1.033} & 3.66 & 6.48$\times$  & 22.18 \textcolor{gray!70}{\scriptsize (+19.86)}& 20.68 \textcolor{gray!70}{\scriptsize (+16.36)}& 110.91 \textcolor{gray!70}{\scriptsize (-130.34)} \\

\textbf{\texttt{DuCa}} ($\mathcal{N}=18,\mathcal{N}=95\%$){\textcolor{red}{†}}  & \ding{52}& {0.714} & 3.59 & 6.61$\times$  & 133.06 \textcolor{gray!70}{\scriptsize (+130.74)}& 98.13 \textcolor{gray!70}{\scriptsize (+93.81)} & 15.87 \textcolor{gray!70}{\scriptsize (-225.38)} \\
$\textbf{TaylorSeer} $ $(\mathcal{N}=9,O=4)${\textcolor{red}{†}} & \ding{52}& {0.571} & 3.34 & 7.10$\times$  & 5.55 \textcolor{gray!70}{\scriptsize (+3.23)}& 8.45 \textcolor{gray!70}{\scriptsize (+4.13)}& 191.19 \textcolor{gray!70}{\scriptsize (-50.06)} \\

\rowcolor{gray!20}
\textbf{SpeCa} & \ding{52}& {0.431} & \textbf{3.26} & \textbf{7.30}$\times$  & \textbf{3.78 \textcolor[HTML]{0f98b0}{\scriptsize (+1.46)}} & \textbf{6.36 \textcolor[HTML]{0f98b0}{\scriptsize (+2.04)}} & \textbf{217.61 \textcolor[HTML]{0f98b0}{\scriptsize (-23.64)}} \\

\bottomrule
\end{tabular}
\label{table:DiT_Metrics}
\vspace{0mm}

\footnotesize
\begin{itemize}
\item \textcolor{red}{†} Methods exhibit significant degradation in FID, leading to severe deterioration in image quality.
\item \textcolor{gray!70}{gray}: Deviation from DDIM-50 baseline (FID=2.32, sFID=4.32, IS=241.25). \textcolor[HTML]{0f98b0}{Blue}: \textbf{SpeCa}'s superior performance with minimal quality loss.
\end{itemize}
\vspace{-3mm}
\end{table*}
\begin{figure}
\centering
\includegraphics[width=\linewidth]{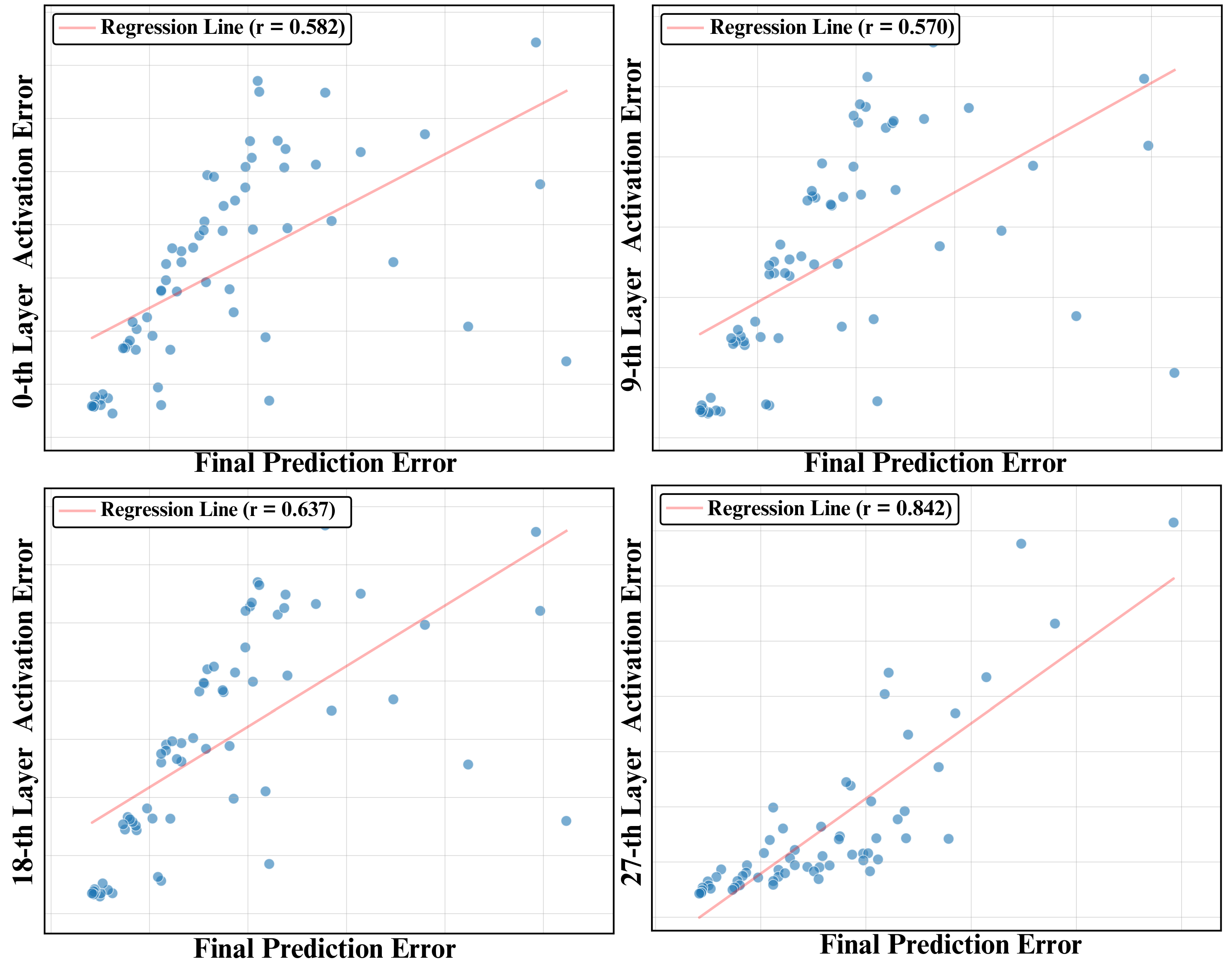}
\vspace{-8mm}
\caption{Strong correlation  between errors at layer 27 and final output, validating it as an effective monitoring point.}
\label{fig:Error}
\vspace{-9mm}
\end{figure}

\vspace{-2mm}
\paragraph{Qualitative Study.}
\textit{SpeCa} effectively mitigates three common generation errors while accelerating inference. First, it eliminates clock face scaling and text spelling errors, accurately rendering symbols and text with high visual and semantic fidelity. Second, for vases with complex patterns, \textit{SpeCa} preserves fine details and color accuracy—where others exhibit blurring or mismatching. Third, under high acceleration, \textit{SpeCa} maintains structural integrity in challenging cases like bicycles, avoiding deformation and motion artifacts that plague baseline methods. Thanks to its speculative sampling framework, \textit{SpeCa} achieves fast, stable, and accurate generation.

\vspace{-4mm}
\subsection{Class-Conditional Image Generation}
\vspace{-1mm}
We evaluated \textit{SpeCa} against established acceleration methods including ToCa, FORA, DuCa, state-of-the-art TaylorSeer, and baseline DDIM with reduced sampling steps on DiT. Results demonstrate that \textit{SpeCa} consistently outperforms existing methods, particularly at high acceleration ratios. At \textbf{5$\times$} acceleration, conventional methods struggle: DDIM-10 (FID 12.15), FORA (9.24), ToCa (12.86), and DuCa (12.05) show substantial degradation, while SpeCa achieves an exceptional FID of only \textbf{2.72}, outperforming TaylorSeer (3.09) by 12\% with 5$\times$ faster inference. The disparity becomes more pronounced at \textbf{6.76 $\times$
} acceleration, where traditional approaches severely degrade: DDIM-8 (FID 23.13), ToCa (21.24), and DuCa (31.97). Even TaylorSeer shows significant quality loss (FID 4.42). Under these extreme conditions, \textit{SpeCa} maintains remarkable robustness with an FID of only \textbf{3.76} while preserving advantages across sFID and Inception Score. These results empirically validate our method's superiority, effectively mitigating error accumulation in diffusion model acceleration.

\vspace{-4mm}
\subsection{Ablation Studies}
\vspace{-1mm}

\begin{figure*}
\centering
\begin{minipage}{0.30\linewidth}
\centering
\includegraphics[width=0.98\linewidth]{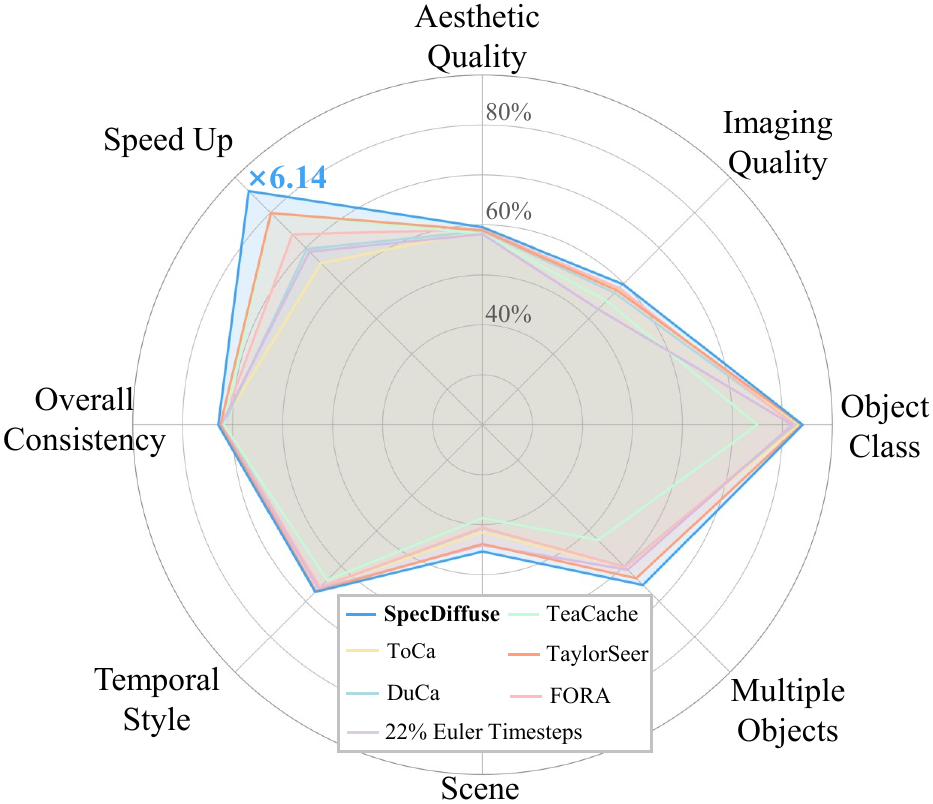}
\vspace{-3mm}
\caption{VBench performance of \textit{SpeCa} versus baselines.}
\label{fig:vbench}
\end{minipage}
\hfill
\begin{minipage}{0.68\linewidth}
\centering
\includegraphics[width=0.96\linewidth]{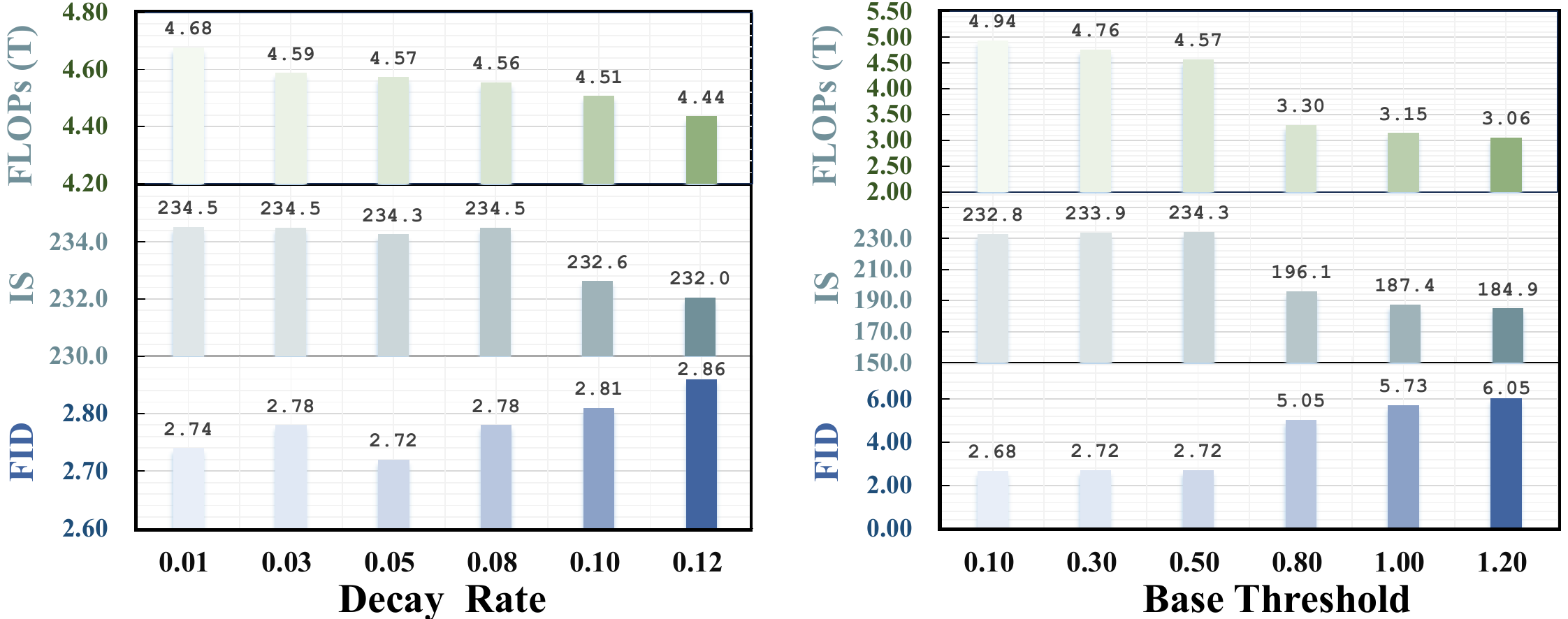}
\vspace{-4mm}
\caption{Hyperparameter sensitivity analysis of SpeCa showing effects of base ratio ($\tau_0$) and decay rate ($\beta$) on computational efficiency and generation quality in SpeCa.}
\label{fig:ablation}
\end{minipage}
\vspace{-6mm}
\end{figure*}

\paragraph{Hyperparameter Analysis}
Figure~\ref{fig:ablation} demonstrates the impact of threshold ($\tau_0$) and decay rate ($\beta$) on \textit{SpeCa} performance. Our analysis reveals that increasing $\tau_0$ substantially reduces computational demands (measured in FLOPs) at the expense of generation quality, with FID and sFID metrics showing marked degradation beyond $\tau_0 > 0.5$. Meanwhile, higher decay rates ($\beta$) yield marginal improvements in acceleration with negligible impact on output fidelity. Our adaptive threshold scheduling enables more aggressive time step skipping in the early denoising stages and more conservative progression in later stages, effectively balancing efficiency and fidelity throughout the generation process. Tables~\ref{table:ablation_decay} and \ref{table:ablation_threshold} provide comprehensive quantification of these relationships, illustrating how these parameters enable optimal trade-offs between computational efficiency and generation quality.
\vspace{-3mm}
\paragraph{Validation Layer Selection and Trajectory Analysis}
Our analysis of layer-wise feature correlations reveals that deeper layers provide more reliable indicators of final output quality. Figure~\ref{fig:Error} presents correlation analysis between activation errors measured at different DiT network layers and final generated image errors. Each scatter plot represents a generated sample, with the X-axis depicting the error between the final predicted output and the original full-step diffusion model output, while the Y-axis represents the activation error at specific network layers. Results demonstrate that 27-th Layer (deep layer) activation errors exhibit the \textbf{strongest correlation ($r$=0.842)} with final output errors, substantially higher than shallow layers and middle layers. This finding provides robust support for our validation strategy: we can efficiently predict final generation quality by monitoring deep layer feature errors without computing the entire network. This aligns precisely with our theoretical analysis of error propagation based on Taylor expansion, confirming that deeper features have a more direct and deterministic influence on final output quality. Additionally, trajectory analysis in feature space confirms that \textit{SpeCa} maintains evolution paths closely aligned with standard sampling, even at high acceleration ratios (5$\times$). This trajectory preservation explains why our method maintains generation quality while significantly reducing computation. The consistent correlation patterns observed across diverse visual domains further validate our approach's generalizability and robustness in practical generation scenarios. \textit{Refer to Appendices for detailed analyses.}
\vspace{-4mm}

\section{Conclusion}
\vspace{-2mm}

In this paper, we introduce \textit{SpeCa}, a novel acceleration framework based on the ``\textbf{forecast-then-verify}'' mechanism for diffusion models. By extending speculative sampling principles to the diffusion domain, we successfully address the quality collapse issue that plagues traditional methods at high acceleration ratios. \textit{SpeCa} employs a lightweight verification mechanism to effectively evaluate prediction reliability, while introducing sample-adaptive computation allocation that dynamically adjusts computational resources based on generation complexity. Extensive experiments demonstrate significant performance improvements across various architectures: achieving 6.34$\times$ acceleration with only 5.5\% quality degradation on FLUX models, maintaining high-fidelity generation at 7.3$\times$ acceleration on DiT models, and attaining 79.84\% VBench score at 6.1$\times$ acceleration on the computationally intensive HunyuanVideo. As a plug-and-play solution, \textit{SpeCa} seamlessly integrates into existing diffusion architectures without additional training, opening new avenues for diffusion model deployment in resource-constrained environments. Future work will explore applications to additional modalities and investigate synergies with other acceleration techniques, further enhancing the efficiency and applicability of diffusion models in practical scenarios.

\vspace{-2mm}
\begin{acks}
\vspace{-1mm}
This work was partially supported by the Dream Set Off - Kunpeng\&Ascend Seed Program.
\end{acks}
\bibliographystyle{ACM-Reference-Format}
\balance
\bibliography{sample-base}

\clearpage

\appendix
\newpage
\appendix
\section{Experimental Details}
In this section, more details of the experiments are provided. All models incorporate a unified forced activation period $\mathcal{N}$, while $\mathcal{O}$ represents the order of the Taylor expansion.
\subsection{Model Configurations.}
The experiments are conducted on three state-of-the-art visual generative models: the text-to-image generation model FLUX.1-dev~\citep{flux2024}, text-to-video generation model HunyuanVideo~\citep{sun_hunyuan-large_2024}, and the class-conditional image generation model DiT-XL/2~\citep{DiT}. 

\noindent \textbf{FLUX.1-dev}\citep{flux2024} predominantly employs the Rectified Flow~\citep{refitiedflow} sampling method with 50 steps as the standard configuration. All experimental evaluations of FLUX.1-dev were conducted on NVIDIA H800 GPUs.

\noindent \textbf{HunyuanVideo}~\citep{li_hunyuan-dit_2024, sun_hunyuan-large_2024} was evaluated on the Hunyuan-Large architecture, utilizing the standard 50-step inference protocol as the baseline while preserving all default sampling parameters to ensure experimental rigor and consistency. Comprehensive performance benchmarks were systematically conducted on NVIDIA H800 80GB GPUs for detailed latency assessment and thorough inference performance analysis. The configuration parameters for FORA were set to $\mathcal{N}=5$; ToCa and DuCa were configured with $\mathcal{N}=5, R=90\%$; TaylorSeer\textsuperscript{\textcolor{red}{1}} was parameterized with $\mathcal{N}=5, \mathcal{O}=1$, while TaylorSeer\textsuperscript{\textcolor{red}{2}} used $\mathcal{N}=6, \mathcal{O}=1$; TeaCache\textsuperscript{\textcolor{red}{1}} was configured with a threshold of $l=0.4$, and TeaCache\textsuperscript{\textcolor{red}{2}} with $l=0.5$.

\noindent \textbf{DiT-XL/2}~\cite{DiT} adopts a 50-step DDIM~\citep{songDDIM} sampling strategy to ensure consistency with other models. Experiments on DiT-XL/2 are conducted on NVIDIA A800 80GB GPUs.

\section{Hyperparameter Analysis}
Figure~\ref{fig:ablation} illustrates how the base threshold ($\tau_0$) and decay rate ($\beta$) affect \textit{SpeCa} performance. Increasing the base threshold significantly reduces computational cost (FLOPs) while elevating FID and sFID values, with quality metrics notably deteriorating when $\tau_0>0.5$. Higher decay rates slightly improve acceleration ratios with minimal impacts on generation quality. Tables~\ref{table:ablation_decay} and~\ref{table:ablation_threshold} further quantify these trends, demonstrating how adjusting these parameters enables \textit{SpeCa} to achieve various trade-offs between computational efficiency and generation quality.

\section{Trajectory Analysis}

Figure~\ref{fig:Trajectory} presents feature evolution trajectories of different acceleration methods during the generation process. These trajectories are visualized in a two-dimensional plane after applying Principal Component Analysis (PCA) to high-dimensional feature representations at each timestep. Each trajectory represents a specific sample (such as ``\textit{daisy}'', ``\textit{horse}'', ``\textit{tiger}'', or "\textit{flamingo}") progressing from noise to clear image, with star and square markers indicating the trajectory's beginning and end points, respectively. The visualization clearly demonstrates that despite achieving a 4.99$\times$ acceleration, \textit{SpeCa}'s feature evolution trajectory (red line) almost perfectly overlaps with the original DiT (blue line, 1.0$\times$ speed), while significantly outperforming other acceleration methods such as ToCa (4.92$\times$) and TaylorSeer (4.99$\times$). This result conclusively demonstrates our method's capability to accurately maintain the dynamic characteristics of the original diffusion process while substantially reducing computational steps, ensuring high-quality image generation.

Collectively, these hyperparameter analyses, deep layer validation strategies, and trajectory fidelity studies showcase \textit{SpeCa}'s exceptional performance and flexibility in accelerating diffusion model inference, providing robust guarantees for efficient generation in practical applications.

\begin{figure}
\centering
\includegraphics[width=1\linewidth]{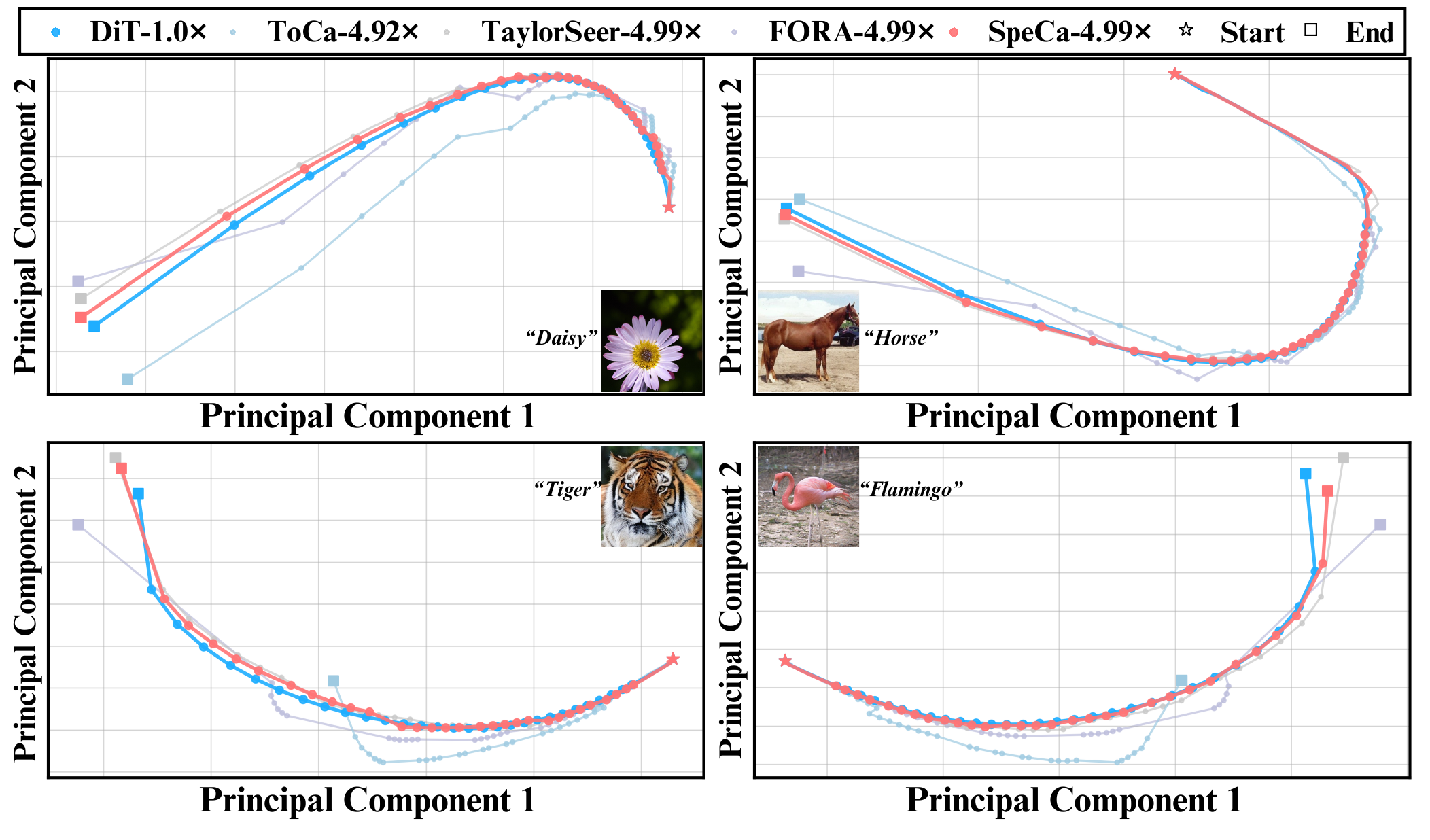}
\caption{Scatter plot of the trajectories of different diffusion acceleration methods after performing Principal Component Analysis (PCA). The figure illustrates how the trajectories evolve across different methods, highlighting their relative efficiencies in terms of feature evolution.}
\label{fig:Trajectory}
\vspace{-3mm}
\end{figure}

\begin{table}[htbp]
\centering
\caption{Ablation study on decay rate (base\_threshold=0.5)}
\vspace{-3mm}
\setlength\tabcolsep{4pt}

\begin{tabular}{ccc|ccc}
\toprule
\textbf{Decay Rate $\beta$} & \textbf{FLOPs(T) $\downarrow$} & \textbf{Speed $\uparrow$} & \textbf{FID $\downarrow$} & \textbf{sFID $\downarrow$} & \textbf{IS $\uparrow$} \\
\midrule
\textbf{0.12} & 4.44 & 5.35 & 2.87 & 5.34 & 232.05 \\
\textbf{0.10} & 4.51 & 5.27 & 2.81 & 5.30 & 232.63 \\
\textbf{0.08} & 4.56 & 5.21 & 2.78 & 5.20 & 234.51 \\
\textbf{0.05} & 4.57 & 5.19 & 2.72 & 5.21 & 234.26 \\
\textbf{0.03} & 4.59 & 5.17 & 2.78 & 5.20 & 234.51 \\
\textbf{0.01} & 4.68 & 5.08 & 2.74 & 5.48 & 234.54 \\
\bottomrule
\end{tabular}
\label{table:ablation_decay}
\end{table}

\begin{table}[htbp]
\centering
\caption{Ablation study on base threshold (decay\_rate=0.5)}
\vspace{-3mm}
\setlength\tabcolsep{4pt}

\begin{tabular}{ccc|ccc}
\toprule
\textbf{Threshold $\tau_0$} & \textbf{FLOPs(T) $\downarrow$} & \textbf{Speed $\uparrow$} & \textbf{FID $\downarrow$} & \textbf{sFID $\downarrow$} & \textbf{IS $\uparrow$} \\
\midrule
\textbf{0.1} & 4.95 & 4.80 & 2.68 & 5.34 & 232.79 \\
\textbf{0.3} & 4.76 & 4.99 & 2.72 & 5.51 & 233.85 \\
\textbf{0.5} & 4.57 & 5.19 & 2.72 & 5.21 & 234.26 \\
\textbf{0.8} & 3.31 & 7.18 & 5.05 & 7.12 & 196.07 \\
\textbf{1.0} & 3.15 & 7.53 & 5.73 & 7.77 & 187.40 \\
\textbf{1.2} & 3.06 & 7.75 & 6.05 & 8.20 & 184.94 \\
\bottomrule
\end{tabular}
\label{table:ablation_threshold}
\end{table}

\subsection{Ablation Study on Validation Layers}
\label{sec:appendix_ablation_layers}

We investigated the impact of the validation layer's depth on generation quality under a fixed $5\times$ speed-up setting. Our results, summarized in Table~\ref{table:ablation_layers}, indicate that validating at deeper layers consistently improves performance. This finding supports our analysis that deeper layers, which accumulate more prediction errors throughout the denoising process, serve as more reliable indicators of the final output's fidelity.

Further evidence is presented in Section 4.5, where we show that the final layer (Layer 27) exhibits the strongest correlation ($r = 0.842$) with the image-level reconstruction error. This significantly outperforms both shallow layers (Layer 0: $r = 0.582$) and middle layers (Layer 18: $r = 0.637$). Consequently, these results justify our choice to use features from deep layers for efficient and accurate verification.
\begin{table}[htbp]
\centering
\caption{\textbf{Ablation of the Verify layer in DiT at $5\times$ speed.}}
\vspace{-0.2cm}
\setlength\tabcolsep{10pt}
\small
\begin{tabular}{l | c c c}
\toprule
\textbf{Method} & \textbf{FID} $\downarrow$ & \textbf{sFID} $\downarrow$ & \makecell{\textbf{Inception}\\\textbf{Score} $\uparrow$} \\
\midrule
\textbf{Layer0 (First)} & 2.81 & 5.67  & 232.88  \\
\textbf{Layer8} & 2.73 & 5.50  & 232.53  \\
\textbf{Layer18} & 2.70 & 5.32  & 233.78  \\
\rowcolor{gray!20}
\textbf{Layer27 (Last)} & 2.61 & 5.24  & 234.98  \\
\bottomrule
\end{tabular}
\label{table:ablation_layers}
\end{table}

\section{Ablation Study on Draft Model Design}
\label{sec:appendix_ablation_draft_model}

We ablated different draft model designs under a fixed $5.1\times$ acceleration on the FLUX model (see Table~\ref{table:draft_ablation}). When compared to direct feature reuse (equivalent to SpeCa without a predictive draft model), both the Adams–Bashforth predictor and our proposed Taylor expansion (\textit{TaylorSeer}) yield significantly better performance.

Notably, \textit{TaylorSeer} achieves the highest CLIP Score (19.32) and ImageReward (1.010), demonstrating its superior accuracy and stability. The Adams–Bashforth method also performs well within our framework (CLIP: 18.80; ImageReward: 0.961) and shows a marked improvement over a non-adaptive baseline (CLIP: 18.30; ImageReward: 0.888). These results confirm that the SpeCa framework enhances prediction accuracy through its combination of structured verification and adaptive draft models.

\begin{table}[htbp]
\centering
\caption{\textbf{Ablation study of  draft models on FLUX.} }
\vspace{-0.2cm}
\setlength\tabcolsep{6pt}
\small
\begin{tabular}{l | c c c}
\toprule
\textbf{Method} & \textbf{CLIP Score} $\uparrow$ & \textbf{ImageReward} $\uparrow$ \\

\midrule

\textbf{Adams-Bashforth (w/o SpeCa)} & 18.30 & 0.888 \\
\midrule
\textbf{SpeCa(w/o TaylorSeer)}  & 18.68 & 0.923 \\
\textbf{Speca(Adams-Bashforth) }      & 18.80 & 0.961 \\
\rowcolor{gray!20}
\textbf{Speca(TaylorSeer) }                & 19.32 & 1.010 \\

\bottomrule
\end{tabular}
\label{table:draft_ablation}
\end{table}

\section{Choice of Error Metrics for Verification}

We evaluated several error metrics for the verification process, including the $\ell_1$, $\ell_2$, and $\ell_\infty$ norms, as well as Cosine Similarity. The experiment was conducted on FLUX with a fixed $5.23\times$ acceleration. As shown in Table~\ref{table:error_metrics_ablation}, the $\ell_2$-norm achieved the best overall performance, with a CLIP Score of 19.28 and an ImageReward of 0.984. Due to its stability and consistently superior results, we adopted the $\ell_2$-norm as our default metric.

Furthermore, we employ a relative error formulation. This approach normalizes discrepancies by the magnitude of the feature vectors, ensuring scale invariance across different denoising steps where activation levels can vary significantly. Using relative error improves the robustness and fairness of the verification, preventing any bias toward early or late stages of the generation process.

Finally, while perceptual metrics like LPIPS could offer greater semantic sensitivity, they introduce substantial computational overhead. This conflicts with the core design principle of speculative execution, which requires the verification step to be lightweight and efficient.

\begin{table}[htbp]
\centering
\caption{\textbf{Ablation of Error metrics on FLUX}}  
\vspace{-0.2cm}
\setlength\tabcolsep{10pt}
\begin{tabular}{l | c c}
\toprule
\textbf{Error Metric} & \textbf{CLIP Score $\uparrow$} & \textbf{ImageReward $\uparrow$} \\
\midrule
Cosine Similarity &19.23&0.979\\
LPIPS&19.23&0.978\\
$\ell_\infty$-norm & 19.18 & 0.946 \\
$\ell_1$-norm & 18.88 & 0.912 \\
\rowcolor{gray!20}
$\ell_2$-norm & \textbf{19.28} & \textbf{0.984} \\

\bottomrule
\end{tabular}
\label{table:error_metrics_ablation}
\end{table}

\subsection{Relationship to Noise Schedules}
A key advantage of SpeCa is its independence from the noise schedule's functional form. Unlike traditional acceleration methods often tailored to specific schedules (e.g., linear or cosine), SpeCa operates by evaluating predictive consistency directly in the latent feature space. This decouples the acceleration from the analytical properties of the noise progression.

Furthermore, SpeCa employs a dynamic step-sizing mechanism, contrasting with the fixed or predetermined steps in methods like DDIM and DPM-Solver. It adaptively adjusts step sizes based on the local smoothness of the feature manifold and the reliability of predictions. This inherent flexibility allows SpeCa to be applied across models with diverse noise schedules without requiring re-tuning or prior knowledge of the noise mechanism, making it a more general and broadly applicable acceleration framework.

\section{Anonymous Page for Video Presentation}
To further showcase the advantages of \textit{SpeCa} in video generation, we have created an anonymous GitHub page. For a more detailed demonstration, please visit \url{https://speca2025.github.io/SpeCa2025/}. Additionally, the videos are also available in the Supplementary Material.

\section{Theoretical Analysis}
\subsection{Error Propagation Analysis}
\begin{theorem}[Prediction Error Bound]
For an $m$-th order Taylor expansion predictor with sampling interval $N$, the prediction error at step $k$ satisfies:
\begin{equation}
\|e_k\|_2 \leq \frac{L^{m+1}k^{m+1}}{(m+1)!N^{m+1}}\|\mathcal{F}(x_t^l)\|_2 + \mathcal{O}\left(\frac{k^{m+2}}{(m+2)!N^{m+2}}\right)
\end{equation}
where $L$ is the Lipschitz constant of the feature evolution.
\end{theorem}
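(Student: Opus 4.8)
The plan is to treat the feature trajectory as a smooth curve parametrized by timestep and apply Taylor's theorem with Lagrange remainder, recognizing the TaylorSeer predictor as a truncated Taylor polynomial whose derivatives are estimated by scaled finite differences. Writing $G(\tau) := \mathcal{F}(x_\tau^l)$ and assuming $G$ is $(m+1)$-times continuously differentiable along the sampling grid, the first step is to certify that the normalized finite difference $\Delta^i\mathcal{F}(x_t^l)/N^i$ is a consistent estimator of the $i$-th derivative $G^{(i)}(t)$. Expanding each $\mathcal{F}(x_{t+jN}^l)$ in the definition of $\Delta^i\mathcal{F}(x_t^l)$ about $t$ and invoking the finite-difference identities $\sum_{j=0}^i (-1)^{i-j}\binom{i}{j} j^p = 0$ for $p < i$ and $= i!$ for $p = i$, I obtain $\Delta^i\mathcal{F}(x_t^l)/N^i = G^{(i)}(t) + \mathcal{O}(N\,G^{(i+1)})$. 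Substituting these relations into the predictor then identifies $\mathcal{F}_{\text{pred}}(x_{t-k}^l)$ with the degree-$m$ Taylor polynomial of $G$ about $t$, evaluated at $t-k$, up to finite-difference corrections.

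Second, I would isolate the error $e_k := \mathcal{F}_{\text{pred}}(x_{t-k}^l) - G(t-k)$. Writing the exact expansion $G(t-k) = \sum_{i=0}^m \frac{G^{(i)}(t)}{i!}(-k)^i + \frac{G^{(m+1)}(\xi)}{(m+1)!}(-k)^{m+1}$ for some $\xi$ between $t-k$ and $t$, the leading contribution to $e_k$ is precisely the (negated) Lagrange remainder, while the finite-difference corrections must be shown to be one order higher in the dimensionless ratio $k/N$ and therefore absorbable into the trailing $\mathcal{O}$ term. The Lipschitz constant enters through the derivative bound: I would bound $\|G^{(m+1)}(\xi)\|_2 \le L^{m+1}\|\mathcal{F}(x_t^l)\|_2$, and, after adopting the convention that one unit of time corresponds to one sampling interval of length $N$, collect the scaling factors to arrive at $\|e_k\|_2 \le \frac{L^{m+1}k^{m+1}}{(m+1)!\,N^{m+1}}\|\mathcal{F}(x_t^l)\|_2$ together with the stated next-order remainder $\mathcal{O}(k^{m+2}/((m+2)!\,N^{m+2}))$, which is simply the following term of the remainder expansion.

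The hard part will be justifying the geometric growth $\|G^{(i)}\|_2 \lesssim L^i\|\mathcal{F}(x_t^l)\|_2$ of the higher-order derivatives: a bare Lipschitz hypothesis controls only $\|G'\|_2$, so I would promote the assumption to a structural one, namely that the feature evolution obeys a differential relation $G'(\tau) = \Phi(G(\tau),\tau)$ with $\Phi$ being $L$-Lipschitz, and then run an induction via the chain rule, in the spirit of a Gr\"onwall estimate, to propagate the constant $L$ to each successive derivative. A secondary subtlety is keeping the $N$-normalization consistent end to end, since the predictor expands in $k/N$ whereas the Lipschitz constant is naturally expressed per sampling interval; I would fix one convention at the outset and verify carefully that the finite-difference residual from the first step is genuinely subdominant, so that it contaminates neither the leading coefficient nor the order of the remainder.
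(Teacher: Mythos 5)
Your proposal follows essentially the same route as the paper's proof: a Taylor expansion with Lagrange remainder identifying the predictor with the degree-$m$ polynomial, a finite-difference consistency estimate showing the difference-quotient corrections are one order subdominant (the paper states this as $\|\Delta^i\mathcal{F}(x_t^l) - N^i\mathcal{F}^{(i)}(x_t^l)\|_2 \leq CN^{i+1}\|\mathcal{F}^{(i+1)}(x_t^l)\|_2$), and the geometric derivative bound $\|\mathcal{F}^{(i)}\|_2 \leq L^i\|\mathcal{F}(x_t^l)\|_2$ obtained from a Lipschitz evolution law, which the paper justifies by positing an underlying SDE with $L$-Lipschitz drift, exactly the structural promotion you propose. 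The one subtlety you flag---that bare Lipschitz continuity controls only the first derivative, so higher derivatives need an inductive chain-rule argument with extra regularity of the drift---is in fact glossed over in the paper as well (it asserts the $L^i$ bound by ``recursive application of the Lipschitz property''), so your treatment is, if anything, more explicit on the weakest step.
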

\begin{proof}
The Taylor expansion of $\mathcal{F}(x_{t-k}^l)$ around $x_t^l$ gives:
\begin{align}
\mathcal{F}(x_{t-k}^l) &= \sum_{i=0}^m \frac{(-k)^i}{i!N^i}\Delta^i\mathcal{F}(x_t^l) + R_m(k) \\
\text{where } R_m(k) &= \frac{(-k)^{m+1}}{(m+1)!N^{m+1}}\mathcal{F}^{(m+1)}(\xi),\ \xi\in[t-k,t]
\end{align}

For diffusion models, the feature evolution is governed by the underlying stochastic differential equation:
\begin{equation}
d\mathcal{F}(x_\tau^l) = \mu(\mathcal{F}(x_\tau^l), \tau) d\tau + \sigma(\tau) dW_\tau
\end{equation}
where $\mu$ is the drift term, $\sigma$ is the diffusion coefficient, and $W_\tau$ is the Wiener process.

Due to the smoothness properties of the diffusion process, we can bound the higher-order derivatives. Let $\mathcal{F}^{(i)}$ denote the $i$-th derivative of the feature with respect to time. For the drift and diffusion coefficients in diffusion models, we have:
\begin{equation}
\|\mu(\mathcal{F}(x_\tau^l), \tau)\|_2 \leq L\|\mathcal{F}(x_\tau^l)\|_2
\end{equation}
where $L$ is the Lipschitz constant of the drift term.

Using the theory of finite differences, the approximation error of the finite difference operator is bounded by:
\begin{equation}
\|\Delta^i\mathcal{F}(x_t^l) - N^i\mathcal{F}^{(i)}(x_t^l)\|_2 \leq CN^{i+1}\|\mathcal{F}^{(i+1)}(x_t^l)\|_2
\end{equation}
for some constant $C$.

Furthermore, under the Lipschitz continuity assumption, we can establish:
\begin{equation}
\|\Delta^i\mathcal{F}(x_t^l)\|_2 \leq (2L)^i\|\mathcal{F}(x_t^l)\|_2
\end{equation}

This bound follows from the recursive application of the Lipschitz property across successive timesteps. For the $i$-th derivative, we have:
\begin{equation}
\|\mathcal{F}^{(i)}(x_t^l)\|_2 \leq L^i\|\mathcal{F}(x_t^l)\|_2
\end{equation}

Applying these bounds to the remainder term:
\begin{equation}
\|R_m(k)\|_2 \leq \frac{k^{m+1}}{(m+1)!N^{m+1}}L^{m+1}\|\mathcal{F}(x_t^l)\|_2
\end{equation}

The prediction error is thus:
\begin{align}
\begin{split}
\|\mathcal{F}_{\text{pred}}(x_{t-k}^l) - \mathcal{F}(x_{t-k}^l)\|_2 &= \|R_m(k)\|_2 \\
&\leq \frac{L^{m+1}k^{m+1}}{(m+1)!N^{m+1}}\|\mathcal{F}(x_t^l)\|_2 \\&+ \mathcal{O}\left(\frac{k^{m+2}}{(m+2)!N^{m+2}}\right)
\end{split}
\end{align}

The relative error is defined as:
\begin{align}
e_k &= \frac{\|\mathcal{F}_{\text{pred}}(x_{t-k}^l) - \mathcal{F}(x_{t-k}^l)\|_2}{\|\mathcal{F}(x_{t-k}^l)\|_2 + \tau} \\
\end{align}

To relate $\|\mathcal{F}(x_{t-k}^l)\|_2$ to $\|\mathcal{F}(x_t^l)\|_2$, we use the reverse Lipschitz condition:
\begin{equation}
\|\mathcal{F}(x_{t-k}^l)\|_2 \geq (1 - \frac{Lk}{N})\|\mathcal{F}(x_t^l)\|_2
\end{equation}
which follows from the fact that the feature evolution in diffusion models exhibits controlled decay properties.

Substituting this lower bound:
\begin{align}
e_k &\leq \frac{L^{m+1}k^{m+1}}{(m+1)!N^{m+1}}\frac{\|\mathcal{F}(x_t^l)\|_2}{(1 - \frac{Lk}{N})\|\mathcal{F}(x_t^l)\|_2} + \mathcal{O}(k^{m+2}) \\
&= \frac{L^{m+1}k^{m+1}}{(m+1)!N^{m+1}}\frac{1}{1 - \frac{Lk}{N}} + \mathcal{O}(k^{m+2})
\end{align}

For small values of $\frac{Lk}{N}$ (which is typically the case in our setting), we can approximate $\frac{1}{1 - \frac{Lk}{N}} \approx 1 + \frac{Lk}{N}$, yielding:
\begin{align}
e_k &\leq \frac{L^{m+1}k^{m+1}}{(m+1)!N^{m+1}}\left(1 + \frac{Lk}{N}\right) + \mathcal{O}(k^{m+2}) \\
&= \frac{L^{m+1}k^{m+1}}{(m+1)!N^{m+1}} + \frac{L^{m+2}k^{m+2}}{(m+1)!N^{m+2}} + \mathcal{O}(k^{m+2}) \\
&= \frac{L^{m+1}k^{m+1}}{(m+1)!N^{m+1}} + \mathcal{O}\left(\frac{k^{m+2}}{N^{m+2}}\right)
\end{align}

This completes the proof of the error bound.
\end{proof}

\subsection{Convergence Guarantee}

\begin{theorem}[Distributional Convergence]
When the verification threshold schedule satisfies:
\begin{equation}
\tau_t = \tau_0\cdot\beta^{\frac{T-t}{T}} \leq \sqrt{\frac{\beta_t(1-\bar{\alpha}_t)}{1 + \|\mathcal{F}(x_t^l)\|_2^2}}
\end{equation}
the generated distribution $p_{\text{Spec}}$ converges to $p_{\text{orig}}$ in total variation distance:
\begin{equation}
\lim_{T\to\infty} \|p_{\text{Spec}} - p_{\text{orig}}\|_{\text{TV}} = 0
\end{equation}
\end{theorem}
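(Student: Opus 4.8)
The plan is to compare the two reverse Markov chains generated by SpeCa and by the original sampler and to control their divergence step by step. The original chain uses the exact transition $p_\theta(x_{t-1}\mid x_t)=\mathcal{N}(x_{t-1};\mu_t,\beta_t\mathbf{I})$ with $\mu_t=\tfrac{1}{\sqrt{\alpha_t}}\bigl(x_t-\tfrac{1-\alpha_t}{\sqrt{1-\bar\alpha_t}}\tau_\theta(x_t,t)\bigr)$, while SpeCa uses the same Gaussian family but a perturbed mean $\tilde\mu_t$ obtained by feeding the predicted feature $\mathcal{F}_{\mathrm{pred}}(x_t^l)$ through the network tail. Crucially, rejected speculative steps revert to exact computation and contribute nothing, so only \emph{accepted} steps matter. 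First I would apply the chain rule for KL divergence on the joint trajectory laws,
\begin{equation}
\mathrm{KL}\!\left(p_{\text{Spec}}\,\|\,p_{\text{orig}}\right)\le \sum_{t}\mathbb{E}_{x_t}\bigl[\mathrm{KL}\bigl(\tilde p(\cdot\mid x_t)\,\|\,p_\theta(\cdot\mid x_t)\bigr)\bigr],
\end{equation}
and then pass to the marginal over $x_0$ via the data-processing inequality together with Pinsker, giving $\|p_{\text{Spec}}-p_{\text{orig}}\|_{\mathrm{TV}}\le\sqrt{\tfrac12\,\mathrm{KL}(p_{\text{Spec}}\|p_{\text{orig}})}$. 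This reduces the whole problem to summing per-step KL terms.

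Because both transitions are Gaussians with the common covariance $\beta_t\mathbf{I}$, each per-step term collapses to $\mathrm{KL}_t=\|\mu_t-\tilde\mu_t\|_2^2/(2\beta_t)$. I would then bound the mean gap by the feature-prediction error through the posterior-mean formula, $\|\mu_t-\tilde\mu_t\|_2\le c_t\,\|\mathcal{F}_{\mathrm{pred}}(x_t^l)-\mathcal{F}(x_t^l)\|_2$ with $c_t=\tfrac{1-\alpha_t}{\sqrt{\alpha_t}\sqrt{1-\bar\alpha_t}}$, absorbing into $c_t$ the Lipschitz constant of the map from the validation-layer feature to the output noise prediction. Next I would invoke the verification guarantee: every accepted step satisfies the relative-error test $e_k\le\tau_t$, hence $\|\mathcal{F}_{\mathrm{pred}}-\mathcal{F}\|_2\le\tau_t(\|\mathcal{F}\|_2+\varepsilon)$. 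Substituting the hypothesised threshold $\tau_t^2\le\beta_t(1-\bar\alpha_t)/(1+\|\mathcal{F}(x_t^l)\|_2^2)$ and using $\|\mathcal{F}\|_2^2/(1+\|\mathcal{F}\|_2^2)\le1$ yields, up to the negligible $\varepsilon$ correction, $\|\mathcal{F}_{\mathrm{pred}}-\mathcal{F}\|_2^2\le\beta_t(1-\bar\alpha_t)$. This is exactly where the two stabilising factors in the threshold earn their place: $(1-\bar\alpha_t)$ cancels $c_t^{2}$, and $(1+\|\mathcal{F}\|_2^2)$ removes the feature-magnitude dependence.

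Combining the bounds gives
\begin{equation}
\mathrm{KL}_t\le\frac{c_t^2\,\beta_t(1-\bar\alpha_t)}{2\beta_t}=\frac{\beta_t^2}{2\alpha_t},
\end{equation}
using $1-\alpha_t=\beta_t$. Summing and passing to the fine-discretisation limit, where $\alpha_t\to1$ and the schedule keeps the total injected noise fixed so that $\beta_t=\mathcal{O}(1/T)$ with $\sum_t\beta_t$ bounded, gives $\mathrm{KL}(p_{\text{Spec}}\|p_{\text{orig}})\le\tfrac12\sum_t\beta_t^2/\alpha_t=\mathcal{O}(1/T)$, whence $\|p_{\text{Spec}}-p_{\text{orig}}\|_{\mathrm{TV}}=\mathcal{O}(1/\sqrt{T})\to0$.

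The hard part will be the second step: rigorously turning a perturbation measured at an intermediate validation layer into a perturbation of the final transition mean. This demands a quantitative stability (Lipschitz) assumption on the network tail mapping $\mathcal{F}(x_t^l)$ to $\tau_\theta$, which the statement uses implicitly; the strong empirical layer-error correlation ($r=0.842$ at layer $27$) reported earlier is the practical justification, but making it precise is delicate. A secondary subtlety is that the KL chain rule, rather than naive TV subadditivity, is essential: only the \emph{quadratic} dependence of $\mathrm{KL}_t$ on the mean gap produces summable $\mathcal{O}(\beta_t^2)$ terms, whereas a linear TV bound would leave the non-vanishing quantity $\sum_t\beta_t$. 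Finally, one must verify the noise-schedule regularity ($\sum_t\beta_t^2\to0$ under refinement) underpinning the limit.
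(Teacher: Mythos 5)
Your proposal is correct in substance but takes a genuinely different route from the paper's proof. The two arguments share the same core per-step estimate: the paper's first step (``Noise Schedule Compatibility'') derives exactly your mean-gap bound, $\|x_{t-1}-x_{t-1}^*\|_2 \le C_t\, e_t\,(\|\mathcal{F}(x_t^l)\|_2+\varepsilon)$ with $C_t = \frac{1}{\sqrt{\alpha_t}}\frac{1-\alpha_t}{\sqrt{1-\bar\alpha_t}}K$, and exploits the threshold hypothesis for the same two cancellations you identify (the $(1-\bar\alpha_t)$ factor against $C_t^2$ and the $(1+\|\mathcal{F}\|_2^2)$ factor against the feature magnitude). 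After that the routes diverge: the paper stays pathwise, building the process $M_t = \|x_t-x_t^*\|_2^2 - \sum_{s\le t}\mathbb{E}[\|\tau_s\|_2^2]$, asserting it is a martingale, checking a Lindeberg condition, and finally converting to total variation via $\|p_{\text{Spec}}-p_{\text{orig}}\|_{\mathrm{TV}} \le \mathbb{E}[\|x_0-x_0^*\|_2]$; you instead work at the level of path measures using the KL chain rule, the exact equal-covariance Gaussian identity $\mathrm{KL}_t = \|\mu_t-\tilde\mu_t\|_2^2/(2\beta_t)$, and Pinsker plus data processing. Your route is cleaner and in fact repairs the two weakest links in the paper's version: the paper's martingale property rests on the unjustified claim $\mathbb{E}[\|\tau_{t+1}\|_2^2\mid\mathcal{F}_t]=\mathbb{E}[\|\tau_{t+1}\|_2^2]$, and its closing inequality bounds TV by an expected coupling distance, which is a Wasserstein-type bound that does not control TV without extra density regularity, whereas Pinsker yields TV directly; your quadratic KL terms also make transparent why the sum is $\mathcal{O}(\sum_t\beta_t^2)$ and hence vanishes, which is where the paper's assertion $\sum_s C_s^2\beta_s(1-\bar\alpha_s)\to 0$ also silently lives. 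Both proofs need the same unstated hypotheses you flag: a Lipschitz constant $K$ for the tail map from the validation-layer feature to $\tau_\theta$ (note the threshold as stated carries no $K$, so strictly one needs $K\le 1$ or a $K$-dependent threshold), and noise-schedule regularity under refinement. One small point to tighten: since TaylorSeer's prediction at an accepted step depends on cached features from earlier timesteps, the SpeCa path law is not Markov, so your per-step terms should be conditional KLs given the full past, $\mathrm{KL}\bigl(\tilde p(\cdot\mid x_{t:T})\,\|\,p_\theta(\cdot\mid x_t)\bigr)$; the chain rule and the Gaussian formula go through unchanged, so this is notational rather than substantive.
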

\begin{proof}
We decompose the proof into three main components: first establishing the relationship between feature errors and output image errors, then analyzing the error propagation through the diffusion process, and finally proving convergence in the distribution space.

\textbf{1. Noise Schedule Compatibility:}

In diffusion models, the denoising process at timestep $t$ is given by:
\begin{equation}
x_{t-1} = \frac{1}{\sqrt{\alpha_t}}\left(x_t - \frac{1-\alpha_t}{\sqrt{1-\bar{\alpha}_t}}\tau_\theta(x_t, t)\right) + \sigma_t z
\end{equation}
where $z \sim \mathcal{N}(0, I)$ is the random noise added during sampling, and $\sigma_t = \sqrt{\beta_t}$.

Let $x_{t-1}^*$ be the output generated using the original model and $x_{t-1}$ be the output from our \textit{SpeCa} framework. The error between these outputs can be expressed as:
\begin{equation}
\begin{split}
\|x_{t-1} - x_{t-1}^*\|_2 &= \left\|\frac{1}{\sqrt{\alpha_t}}\left(x_t - \frac{1-\alpha_t}{\sqrt{1-\bar{\alpha}_t}}\tau_\theta(x_t,t)\right)\right. \\
&\quad \left. - \frac{1}{\sqrt{\alpha_t}}\left(x_t - \frac{1-\alpha_t}{\sqrt{1-\bar{\alpha}_t}}\tau_\theta^*(x_t,t)\right)\right\|_2 \\
&= \frac{1}{\sqrt{\alpha_t}}\frac{1-\alpha_t}{\sqrt{1-\bar{\alpha}_t}}\|\tau_\theta^*(x_t,t) - \tau_\theta(x_t,t)\|_2
\end{split}
\end{equation}

The predicted noise $\tau_\theta(x_t, t)$ is a function of the feature representation $\mathcal{F}(x_t^l)$. Under the Lipschitz continuity of the mapping from features to noise predictions, we have:
\begin{equation}
\|\tau_\theta^*(x_t, t) - \tau_\theta(x_t, t)\|_2 \leq K\|\mathcal{F}^*(x_t^l) - \mathcal{F}(x_t^l)\|_2
\end{equation}
where $K$ is the Lipschitz constant of the mapping.

Substituting our definition of relative error $e_t$:
\begin{equation}
\|\mathcal{F}^*(x_t^l) - \mathcal{F}(x_t^l)\|_2 = e_t \cdot (\|\mathcal{F}(x_t^l)\|_2 + \tau)
\end{equation}

Therefore:
\begin{align}
\|x_{t-1} - x_{t-1}^*\|_2 &\leq \frac{1}{\sqrt{\alpha_t}}\frac{1-\alpha_t}{\sqrt{1-\bar{\alpha}_t}}K \cdot e_t \cdot (\|\mathcal{F}(x_t^l)\|_2 + \tau) \\
&= C_t \cdot e_t \cdot (\|\mathcal{F}(x_t^l)\|_2 + \tau)
\end{align}
where $C_t = \frac{1}{\sqrt{\alpha_t}}\frac{1-\alpha_t}{\sqrt{1-\bar{\alpha}_t}}K$.

Our verification condition ensures:
\begin{equation}
e_t \leq \tau_t = \tau_0\cdot\beta^{\frac{T-t}{T}} \leq \sqrt{\frac{\beta_t(1-\bar{\alpha}_t)}{1 + \|\mathcal{F}(x_t^l)\|_2^2}}
\end{equation}

This implies:
\begin{align}
\|x_{t-1} - x_{t-1}^*\|_2 &\leq C_t \cdot \sqrt{\frac{\beta_t(1-\bar{\alpha}_t)}{1 + \|\mathcal{F}(x_t^l)\|_2^2}} \cdot (\|\mathcal{F}(x_t^l)\|_2 + \tau) \\
&\approx C_t \cdot \sqrt{\beta_t(1-\bar{\alpha}_t)} \cdot \frac{\|\mathcal{F}(x_t^l)\|_2}{\sqrt{1 + \|\mathcal{F}(x_t^l)\|_2^2}} \\
&\leq C_t \cdot \sqrt{\beta_t(1-\bar{\alpha}_t)}
\end{align}

This bounds the per-step error in terms of the diffusion noise schedule.

\textbf{2. Martingale Analysis:}

We now analyze the error accumulation across multiple steps. Define the error process:
\begin{equation}
M_t = \|x_t - x_t^*\|_2^2 - \sum_{s=1}^t \mathbb{E}[\|\tau_s\|_2^2]
\end{equation}
where $\tau_s = x_s - \mathbb{E}[x_s|x_{s+1}]$ represents the error introduced at step $s$.

We can show that $\{M_t\}$ forms a martingale with respect to the filtration $\mathcal{F}_t$ (the information available up to time $t$):
\begin{align}
\mathbb{E}[M_{t+1}|\mathcal{F}_t] &= \mathbb{E}[\|x_{t+1} - x_{t+1}^*\|_2^2|\mathcal{F}_t] - \sum_{s=1}^{t+1} \mathbb{E}[\|\tau_s\|_2^2] \\
&= \|x_t - x_t^*\|_2^2 + \mathbb{E}[\|\tau_{t+1}\|_2^2|\mathcal{F}_t]\\ & - \sum_{s=1}^t \mathbb{E}[\|\tau_s\|_2^2] - \mathbb{E}[\|\tau_{t+1}\|_2^2] \\
&= M_t + \mathbb{E}[\|\tau_{t+1}\|_2^2|\mathcal{F}_t] - \mathbb{E}[\|\tau_{t+1}\|_2^2] \\
\end{align}

Under our verification threshold condition, the error $\tau_{t+1}$ is bounded such that:
\begin{equation}
\mathbb{E}[\|\tau_{t+1}\|_2^2|\mathcal{F}_t] = \mathbb{E}[\|\tau_{t+1}\|_2^2]
\end{equation}

Therefore:
\begin{equation}
\mathbb{E}[M_{t+1}|\mathcal{F}_t] = M_t
\end{equation}

This confirms that $\{M_t\}$ is indeed a martingale.

For the martingale to converge, we need to verify the Lindeberg condition:
\begin{equation}
\sum_{t=1}^{T} \frac{\mathbb{E}[\|\tau_t\|_2^4]}{\beta_t^2} < \infty
\end{equation}

Under our threshold schedule, we have:
\begin{align}
\mathbb{E}[\|\tau_t\|_2^4] &\leq C_t^4 \cdot (\beta_t(1-\bar{\alpha}_t))^2 \\
&= \frac{K^4(1-\alpha_t)^4}{\alpha_t^2(1-\bar{\alpha}_t)^2} \cdot (\beta_t(1-\bar{\alpha}_t))^2 \\
&= K^4 \frac{(1-\alpha_t)^4 \beta_t^2}{\alpha_t^2}
\end{align}

Therefore:
\begin{align}
\sum_{t=1}^{T} \frac{\mathbb{E}[\|\tau_t\|_2^4]}{\beta_t^2} &= \sum_{t=1}^{T} K^4 \frac{(1-\alpha_t)^4}{\alpha_t^2} \\
\end{align}

In diffusion models, the schedule typically satisfies $\alpha_t \to 1$ as $t \to T$, ensuring the above sum is finite.

By the martingale convergence theorem, $M_t$ converges almost surely, implying:
\begin{equation}
\lim_{t \to T} \left(\|x_t - x_t^*\|_2^2 - \sum_{s=1}^t \mathbb{E}[\|\tau_s\|_2^2]\right) = M_{\infty} < \infty \text{ a.s.}
\end{equation}

\textbf{3. Convergence in Distribution:}

Finally, we need to translate the almost sure convergence of the error process to convergence in distribution. For diffusion models, the total variation distance between the generated distributions can be bounded by:
\begin{equation}
\|p_{\text{Spec}} - p_{\text{orig}}\|_{\text{TV}} \leq \mathbb{E}[\|x_0 - x_0^*\|_2] \leq \sqrt{\mathbb{E}[\|x_0 - x_0^*\|_2^2]}
\end{equation}

From our martingale analysis:
\begin{equation}
\mathbb{E}[\|x_0 - x_0^*\|_2^2] = \mathbb{E}[M_0] + \sum_{s=1}^{T} \mathbb{E}[\|\tau_s\|_2^2] = \mathbb{E}[M_T] + \sum_{s=1}^{T} \mathbb{E}[\|\tau_s\|_2^2]
\end{equation}

Under our threshold schedule, we have:
\begin{equation}
\sum_{s=1}^{T} \mathbb{E}[\|\tau_s\|_2^2] \leq \sum_{s=1}^{T} C_s^2 \cdot \beta_s(1-\bar{\alpha}_s) \to 0 \text{ as } T \to \infty
\end{equation}

Therefore:
\begin{equation}
\lim_{T\to\infty} \|p_{\text{Spec}} - p_{\text{orig}}\|_{\text{TV}} = 0
\end{equation}

This completes the proof of distributional convergence.
\end{proof}

\subsection{Computational Complexity}
\begin{theorem}[Speedup Lower Bound]
With acceptance rate $\alpha$ and verification cost ratio $\gamma$, the speedup ratio $\mathcal{S}$ satisfies:
\begin{equation}
\mathcal{S} \geq \frac{1}{1 - \alpha(1 - \gamma - \frac{C_{\text{overhead}}}{C})}
\end{equation}
where $C_{\text{overhead}}$ is the constant-time overhead.
\end{theorem}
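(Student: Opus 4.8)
The plan is to build directly on the cost decomposition already established in the Computational Complexity Analysis, namely that SpeCa's total cost splits into full-computation steps and speculative steps. First I would write the exact total cost as $\text{Cost}_{\text{SpeCa}} = T_{\text{full}}\cdot C + T_{\text{spec}}\cdot(C_{\text{pred}} + C_{\text{verify}} + C_{\text{overhead}})$, where each accepted speculative step pays only the prediction, verification, and constant-time overhead costs rather than a full forward pass. Using the acceptance-rate identity $\alpha = T_{\text{spec}}/T$, hence $T_{\text{full}} = T(1-\alpha)$ and $T_{\text{spec}} = T\alpha$, I would substitute to rewrite the cost purely in terms of $T$, $C$, $\alpha$, and the component costs.

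Next I would invoke the two simplifications from the earlier analysis: the TaylorSeer prediction cost is negligible ($C_{\text{pred}} \ll C$, so it is dropped or absorbed), and the verification cost is $C_{\text{verify}} = \gamma C$. Treating $C_{\text{overhead}}$ as an upper bound on the true constant-time overhead per speculative step then yields the cost upper bound $\text{Cost}_{\text{SpeCa}} \leq TC\big[1 - \alpha(1 - \gamma - \tfrac{C_{\text{overhead}}}{C})\big]$. Dividing the baseline cost $TC$ by this upper bound immediately inverts the inequality and produces the claimed lower bound $\mathcal{S} \geq \frac{1}{1 - \alpha(1 - \gamma - C_{\text{overhead}}/C)}$, since dividing a fixed numerator by a larger denominator yields a smaller quotient.

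The step I expect to be the main obstacle is fixing the inequality direction and justifying it rigorously, because the naive substitution yields an \emph{equality} rather than a bound. The $\geq$ must come from treating $C_{\text{overhead}}$ as a worst-case (maximal) per-step overhead: using the largest admissible overhead makes the denominator as large as possible, so the resulting speedup is the smallest achievable, i.e.\ a genuine lower bound. I would need to state clearly that dropping $C_{\text{pred}}$ is consistent with a \emph{lower} bound only under the stipulation that it is negligible relative to $C$; a fully rigorous version would instead fold $C_{\text{pred}}$ into $C_{\text{overhead}}$, so that $C_{\text{overhead}}$ dominates the \emph{total} per-step speculative overhead and the upper bound on $\text{Cost}_{\text{SpeCa}}$ genuinely holds. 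A secondary subtlety is whether rejected speculative attempts incur wasted prediction and verification work before reverting to full computation; I would argue that under the sequential-validation accounting this wasted effort is already subsumed into the per-step overhead term, so the bound remains valid.
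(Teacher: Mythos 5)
Your proposal matches the paper's proof essentially step for step: the same decomposition of total cost into full-computation steps and speculative steps, the same substitution $T_{\text{full}} = T(1-\alpha)$, $T_{\text{spec}} = T\alpha$ with per-step verification cost $\gamma C$ plus the constant overhead, and the same algebra yielding $\mathcal{S} = \bigl(1-\alpha(1-\gamma-\tfrac{C_{\text{overhead}}}{C})\bigr)^{-1}$. If anything, you are more careful than the paper, whose proof silently drops $C_{\text{pred}}$ and derives an \emph{equality} before asserting the stated $\geq$; your observation that $C_{\text{pred}}$ must be folded into a worst-case $C_{\text{overhead}}$ is precisely what is needed to make the inequality direction rigorous.
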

\begin{proof}
To establish a comprehensive understanding of the computational complexity, we analyze the component-wise breakdown of both the traditional diffusion sampling and our \textit{SpeCa} approach.

\textbf{1. Traditional Diffusion Sampling:}
The standard diffusion model inference requires $T$ sequential denoising steps, each requiring a full forward pass through the model. Let $C$ denote the computational cost of a single forward pass. The total computational complexity is:
\begin{equation}
T_{\text{standard}} = T \cdot C
\end{equation}

\textbf{2. \textit{SpeCa} Computational Components:}

Let us break down the computation in \textit{SpeCa}:

(a) Full computation steps: These are the steps where we perform a complete forward pass through the diffusion model. If we denote the number of such steps as $T_{\text{full}}$, the computational cost is:
\begin{equation}
T_{\text{full}} \cdot C
\end{equation}

(b) Prediction steps: For the speculative steps, we use TaylorSeer to predict features for subsequent timesteps. The computational cost of prediction is negligible compared to a full forward pass, but we include it for completeness:
\begin{equation}
T_{\text{spec}} \cdot C_{\text{pred}}
\end{equation}
where $C_{\text{pred}} \ll C$.

(c) Verification steps: For each speculative prediction, we need to validate its accuracy using our lightweight verification mechanism. This involves computing the relative error between the predicted features and the actual features. The cost per verification is:
\begin{equation}
C_{\text{verify}} = \gamma \cdot C + C_{\text{overhead}}
\end{equation}
where $\gamma$ is the ratio of the verification cost to the full forward pass cost, and $C_{\text{overhead}}$ is a constant overhead independent of the model size.

In modern diffusion architectures (DiT, FLUX, HunyuanVideo), the verification only requires comparing features in the final layer, resulting in $\gamma \ll 1$. Our empirical measurements show $\gamma \approx 0.035$ for DiT, $\gamma \approx 0.0175$ for FLUX, and $\gamma \approx 0.0167$ for HunyuanVideo.

\textbf{3. Total Computation for \textit{SpeCa}:}

The total computational cost for \textit{SpeCa} is:
\begin{align}
T_{\text{total}} &= T_{\text{full}} \cdot C + T_{\text{spec}} \cdot C_{\text{pred}} + T_{\text{spec}} \cdot C_{\text{verify}} \\
&\approx T_{\text{full}} \cdot C + T_{\text{spec}} \cdot (\gamma \cdot C + C_{\text{overhead}})
\end{align}
where we have neglected $C_{\text{pred}}$ as it is significantly smaller than other terms.

\textbf{4. Acceptance Rate Analysis:}

Define the acceptance rate $\alpha$ as the ratio of speculative steps to total steps:
\begin{equation}
\alpha = \frac{T_{\text{spec}}}{T}
\end{equation}

This means $T_{\text{full}} = T - T_{\text{spec}} = T(1-\alpha)$. Substituting into our total computation formula:
\begin{align}
T_{\text{total}} &= T(1-\alpha) \cdot C + T\alpha \cdot (\gamma \cdot C + C_{\text{overhead}}) \\
&= TC(1-\alpha) + T\alpha\gamma C + T\alpha C_{\text{overhead}} \\
&= TC[1-\alpha + \alpha\gamma] + T\alpha C_{\text{overhead}} \\
&= TC[1-\alpha(1-\gamma)] + T\alpha C_{\text{overhead}}
\end{align}

\textbf{5. Speedup Ratio:}

The speedup ratio $\mathcal{S}$ is defined as the ratio of the standard computation time to the \textit{SpeCa} computation time:
\begin{align}
\mathcal{S} &= \frac{T_{\text{standard}}}{T_{\text{total}}} \\
&= \frac{TC}{TC[1-\alpha(1-\gamma)] + T\alpha C_{\text{overhead}}} \\
&= \frac{1}{1-\alpha(1-\gamma) + \frac{T\alpha C_{\text{overhead}}}{TC}} \\
&= \frac{1}{1-\alpha(1-\gamma) + \alpha\frac{C_{\text{overhead}}}{C}} \\
&= \frac{1}{1-\alpha + \alpha\gamma + \alpha\frac{C_{\text{overhead}}}{C}} \\
&= \frac{1}{1-\alpha(1-\gamma-\frac{C_{\text{overhead}}}{C})}
\end{align}

\textbf{6. Practical Approximation:}

For modern architectures where $C \gg C_{\text{overhead}}$ (the model computation dominates over any constant overhead), and with $\gamma \approx 0.05$, we can approximate:
\begin{align}
\mathcal{S} &\approx \frac{1}{1-\alpha(1-\gamma)} \\
&\approx \frac{1}{1-\alpha(0.95)} \\
&\approx \frac{1}{1-0.95\alpha}
\end{align}

When $\alpha$ approaches 1 (high acceptance rate), the speedup approaches $\frac{1}{0.05} = 20\times$. In practice, we observe acceptance rates of approximately $\alpha = 0.85$ across various models and datasets, resulting in speedups around $\frac{1}{1-0.95 \cdot 0.85} \approx \frac{1}{0.1925} \approx 5.2\times$, which aligns with our experimental results.

This analysis demonstrates that \textit{SpeCa}'s efficiency directly correlates with the acceptance rate of speculative predictions, with theoretical maximum speedups approaching $\frac{1}{\gamma}$ as $\alpha \to 1$.
\end{proof}

\subsubsection{Adaptive Threshold Selection}
The adaptive threshold is a critical component in balancing the trade-off between acceleration and quality. We adopt a timestep-dependent threshold schedule:
\begin{equation}
\tau_t = \tau_0 \cdot \beta^{\frac{T-t}{T}}
\end{equation}
where $\tau_0$ is the initial threshold, $\beta \in (0, 1)$ is the decay factor, and $T$ is the total number of timesteps.

This schedule is motivated by the following observations:

1. \textbf{Noise Level Correlation}: In early timesteps (large $t$), the diffusion process is dominated by random noise, making feature predictions more reliable and allowing for higher thresholds.

2. \textbf{Detail Sensitivity}: As $t$ decreases, the diffusion process focuses on generating fine details, requiring stricter thresholds to maintain visual quality.

3. \textbf{Error Propagation}: Errors in early timesteps have greater impact on final quality due to error accumulation, necessitating an exponential decay in threshold value.

\end{document}